\documentclass{article}
\usepackage[preprint]{arxiv}
\usepackage{microtype}
\usepackage{hyperref}
\usepackage{url}
\usepackage{booktabs}
\usepackage{amsmath}
\usepackage{amssymb}
\usepackage{amsfonts}
\usepackage{xcolor}
\usepackage{graphicx}
\usepackage{float}
\usepackage{caption}
\usepackage{multirow}
\usepackage{multirow}
\usepackage{booktabs}
\usepackage{amsthm}
\theoremstyle{definition}
\newtheorem{definition}{Definition}[section]
\usepackage{booktabs}
\usepackage{tabularx}
\usepackage{array}
\usepackage{adjustbox}
\newtheorem{proposition}{Proposition}
\usepackage{booktabs}
\usepackage{tabularx}
\usepackage{array}
\usepackage{adjustbox}
\usepackage[most]{tcolorbox}
\usepackage{listings}
\usepackage[T1]{fontenc}
\usepackage{textcomp}
\usepackage{float}
\usepackage{wrapfig}
 \usepackage{booktabs}
 \usepackage{caption}
 \usepackage[most]{tcolorbox}
 \usepackage[T1]{fontenc}
 \usepackage{textcomp}
 \usepackage{listings}
 \lstdefinestyle{promptstyle}{
   basicstyle=\ttfamily\tiny,
   breaklines=true,
   breakatwhitespace=false,
   columns=fullflexible,
   keepspaces=true,
   showstringspaces=false,
   upquote=true
 }
 \tcbset{
   promptbox/.style={
     enhanced,
     breakable,
     colback=gray!3,
     colframe=gray!35,
     coltitle=black,
     fonttitle=\bfseries,
     boxrule=0.35pt,
     arc=2pt,
     left=3pt,
     right=3pt,
     top=3pt,
     bottom=3pt,
     boxsep=2pt,
   }
 }
\usepackage{lineno}
\usepackage[utf8]{inputenc}
\usepackage[T1]{fontenc}
\usepackage{hyperref}
\usepackage{url}
\usepackage{booktabs}
\usepackage{amsfonts}
\usepackage{nicefrac}
\usepackage{microtype}
\usepackage{xcolor}
\usepackage{booktabs}
\usepackage{fancyvrb}
\usepackage{adjustbox}
\usepackage{xcolor}
\usepackage{setspace}
\newtheorem{corollary}{Corollary}
\usepackage{xcolor}
\title{Textual Belief States for World Models: Identifiable Representation Learning Under Strict Mediation}
\author{
  Xiang Gao, Kaiwen Dong, Yuguang Yao, Padmaja Jonnalagedda, Kamalika Das \\
  Intuit AI Research\\
  \texttt{\{xiang\_gao, kamalika\_das\}@intuit.com} \\
}
\begin{document}
\maketitle
\begin{abstract}
World models in partially observed environments rely on latent representations that summarize interaction history, but in many modern LLM-based architectures predictive performance fails to reflect representation quality due to history bypass, rendering the latent state unidentifiable. Strict latent state mediation, requiring predictions to depend only on the latent state and action, is a classical principle that resolves this, but enforcing it in text-based settings is an open challenge: textual latent states are discrete and non-differentiable, precluding variational training, and expressive LLM decoders readily ignore the bottleneck. We show how to make strict mediation work in the text domain. We formalize why it is necessary, showing that strict mediation makes representation quality empirically testable while history-leaky architectures break this connection. We then introduce textual latent states, which are discrete, interpretable, and variable-length, and factorized GRPO (fGRPO), a tree-structured reinforcement learning method that enforces strict mediation during training. Experiments on TextWorld and ScienceWorld show preserved one-step prediction accuracy alongside up to 57\% gains in representation quality and 98\% improvements in rollout performance, increasing with task complexity and horizon.
\end{abstract}

\section{Introduction}
World models learn the dynamics of an environment so that an agent can predict, plan, and reason without further interaction \citep{sutton91, ha2018world}. In partially observed settings, where observations reveal only fragments of the underlying state, a world model must maintain a latent representation that summarizes the interaction history. This latent state is the object on which rollouts operate, the branching point for counterfactual queries, and the compressed summary that replaces an ever-growing history. A large body of work has focused on building better dynamics models \citep{hafner2019dream, hafner2019planet, hafner2023mastering, schrittwieser2020mastering}, but a dynamics model is only as good as the representation it operates on. The central question is not only how to model dynamics, but how to know whether the learned representation is adequate.
This question is difficult to answer for an architectural reason. In many existing world model architectures, the latent state coexists with other information that can substitute for it. Sequence models condition on the full observation history and maintain no explicit latent state \citep{yang2024evaluating, li2025word}. Prompting-based approaches treat the LLM context as an implicit state without learning a compressed representation \citep{hao2023reasoning, wang2024can}. Architectures that introduce a latent state but allow the decoder to also access the history may ignore it entirely: the model can achieve strong predictive performance while the representation carries no useful information \citep{xiang2025pan, ge2024worldgpt}. We call this the \emph{identifiability problem}: when information can bypass the latent state, predictive accuracy does not constrain representation quality.
We propose strict latent state mediation to address this: once a latent state is generated, all subsequent transitions and predictions depend only on that state and the action. This mirrors the POMDP belief-state factorization \citep{kaelbling1998pomdp, murphy2000pomdp}; classical world models with continuous latent states enforce it by construction \citep{hafner2019dream, hafner2019planet}. However, recent LLM-based approaches frequently relax this constraint through large contexts or hybrid conditioning, enabling information to bypass the latent state. While this design improves predictive flexibility, it breaks the connection between predictive performance and representation quality. A key consequence, which motivates our approach, is that strict mediation makes representation quality \emph{empirically testable}. If a strictly mediated model predicts accurately, the latent state must contain all task-relevant information; if it predicts poorly, the representation is insufficient.
We formalize this principle and derive three consequences that motivate our approach. Under strict mediation, optimal predictive accuracy implies that the latent state is a sufficient statistic of history (Proposition~\ref{prop:sufficiency}). Under leaky architectures, optimal accuracy is compatible with an uninformative latent state (Proposition~\ref{prop:leaky}). Strict mediation also structurally aligns training and rollout conditions, so that the model never conditions on information during training that is unavailable during rollout, enabling bounded error propagation when dynamics are contractive, whereas history-based and leaky rollouts suffer geometric error growth (Proposition~\ref{prop:consistency}).
The central technical challenge is enforcing strict mediation in text-based world models, where observations and actions are natural language. Textual latent states align with the LLM backbone, enable inspection, and provide an adaptive information bottleneck \citep{hafner2019dream, hafner2019planet, hafner2025training}. However, they are discrete and non-differentiable, precluding variational training, and expressive decoders make ELBO objectives prone to posterior collapse \citep{bowman2016sentencevae, he2019deltavae}. We address this by treating latent state generation as a stochastic policy and introduce \textbf{factorized GRPO (fGRPO)}, a tree-structured extension that enforces strict mediation during training.
We evaluate on TextWorld \citep{cote2018textworld} and ScienceWorld \citep{wang2022scienceworld}. Our approach preserves one-step prediction accuracy (Proposition~\ref{prop:sufficiency}), yields higher state-level F1 under matched prediction (Proposition~\ref{prop:leaky}), and improves rollout stability, with gains growing from 5\% at horizon~1 to 80\% at horizon~9 (Proposition~\ref{prop:consistency}).

\section{Problem Setup: Text Based POMDPs}
\label{sec:setup}
\begin{figure}[t]
    \centering
    \includegraphics[width=0.9\linewidth]{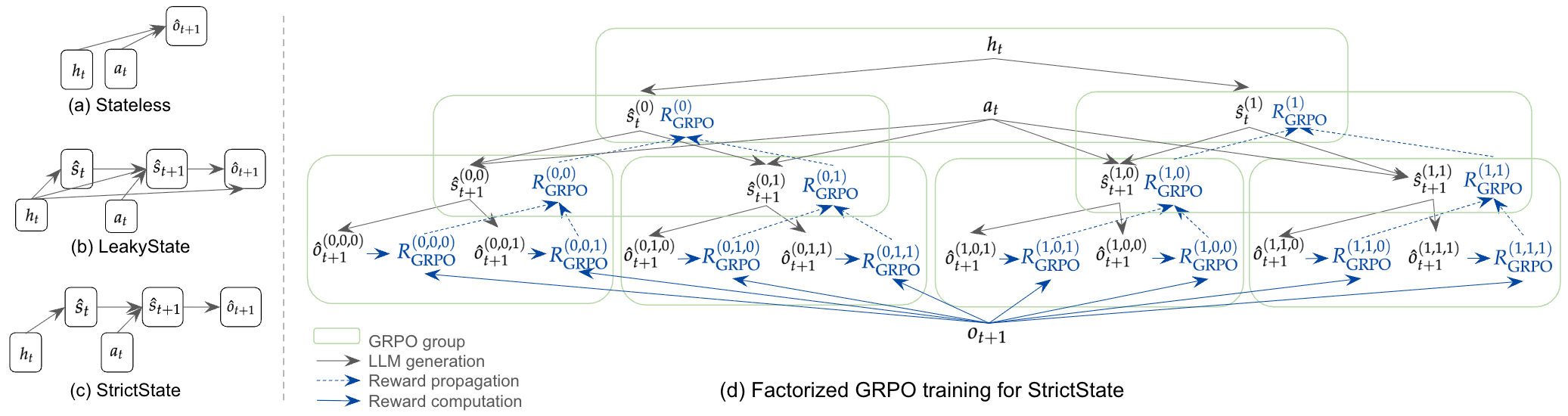}
    \caption{Three formulations of a world model in a partially observed setting: (a) Stateless, (b) LeakyState, and (c) StrictState. StrictState is proposed to be trained using (d) factorized GRPO.}
    \label{fig:combined}
    \vspace{-6pt}
    \end{figure}

We are given an offline dataset of trajectories $(o_0, a_0, o_1, a_1, \ldots, o_T)$ from a partially observable environment, where observations $o_t$ and actions $a_t$ are textual. The true state is unobserved, and a single observation is generally insufficient to identify it, requiring aggregation over time. Our goal is to learn a model that predicts future observations using a latent belief state summarizing history.
\subsection{Latent Belief States and Predictive Sufficiency}
The interaction history at time $t$ is $h_t = (o_0, a_0, o_1, a_1, \ldots, a_{t-1}, o_t)$. Because the environment is partially observed, accurate prediction typically requires multi-step information. A latent belief state $\hat{s}_t \sim p_\theta(\cdot \mid h_t)$ is a learned summary of $h_t$, represented as a text string produced by a language model. Its structure is learned and need not match predefined variables. We require \emph{predictive sufficiency}.
\begin{definition}[Predictive Sufficiency]
\label{def:sufficiency}
A latent state $\hat{s}_t$ is sufficient if
\begin{equation}
p(o_{t+1} \mid h_t, a_t) = p(o_{t+1} \mid \hat{s}_t, a_t) \quad \text{for all } h_t, a_t, \quad \text{i.e.,} \quad o_{t+1} \perp h_t \mid \hat{s}_t, a_t.
\end{equation}
\end{definition}
A sufficient latent state retains all information from the past relevant for predicting the \emph{immediate next} observation. This is weaker than full POMDP belief-state sufficiency, which requires sufficiency for all future observations \citep{kaelbling1998pomdp}; multi-step sufficiency follows when the transition model is also accurate, as the state at each step is re-derived via the learned dynamics. Whether this holds depends on both the objective and architectural constraints. Multiple sufficient statistics may exist for a given prediction problem; we do not require minimality. Strict mediation guarantees that the learned state is sufficient when prediction is optimal (Proposition~\ref{prop:sufficiency}), while our reward design (Section~\ref{sec:reward}) encourages compactness by penalizing redundant and unnecessary facts.
\subsection{Architectural Considerations for Observation Prediction}
\label{sec:arch}
We study three formulations of
$p(o_{t+1} \mid h_t, a_t)$,
which differ in what information is available at prediction time, as illustrated in Figure~\ref{fig:combined}.
\paragraph{Stateless prediction.}
The simplest approach predicts directly from history: $\hat{o}_{t+1} \sim p_\theta(\cdot \mid h_t, a_t)$. This can be accurate but provides no explicit, compact representation and scales poorly with horizon.
\paragraph{Leaky latent prediction.}
A latent state is introduced but prediction still accesses history: $\hat{s}_t \sim p_\theta(\cdot \mid h_t)$, $\hat{o}_{t+1} \sim p_\theta(\cdot \mid \hat{s}_t, h_t, a_t)$. This \emph{leaky} setup allows the model to bypass the latent state, so predictive accuracy does not ensure that $\hat{s}_t$ is informative.
\paragraph{Strict latent state mediation.}
Our formulation requires all prediction to pass through latent state:
\begin{align}
\hat{s}_t \sim p_\theta(\cdot \mid h_t), \qquad
\hat{s}_{t+1} \sim p_\theta(\cdot \mid \hat{s}_t, a_t), \qquad
\hat{o}_{t+1} \sim p_\theta(\cdot \mid \hat{s}_{t+1}, a_t). \nonumber
\end{align}
The predictive distribution can be written as ($s_t$ denotes the true (unobserved) environment state, distinct from the learned latent state $\hat{s}_t$.)
\begin{equation}
\label{eq:pomdp_marginal}
p(o_{t+1} \mid h_t, a_t)
=
\sum_{\hat{s}_{t+1}} p(o_{t+1} \mid \hat{s}_{t+1}, a_t)
\sum_{\hat{s}_t} p(\hat{s}_{t+1} \mid \hat{s}_t, a_t)\, p(\hat{s}_t \mid h_t).
\end{equation}
We include $a_t$ in the observation model since in text-based environments, the observation rendered at step $t+1$ typically depends on the action taken. Strict mediation mirrors this factorization, ensuring all predictive information flows through the latent state. Under this constraint, the latent state is sufficient exactly when it supports accurate prediction without access to history.
These formulations differ only in what information is available at prediction time; we next examine the implications.

\section{Why Strict Mediation Is Necessary}
\label{sec:theory}
The architectural formulations in Section~\ref{sec:arch} differ only in what information is available at prediction time, but this distinction has a fundamental consequence: whether predictive performance constrains the learned representation.
\subsection{Sufficiency from Prediction}
Under strict mediation, all predictive information must pass through the latent state. This makes predictive sufficiency a direct consequence of predictive accuracy.
\begin{proposition}[Sufficiency under strict mediation]
\label{prop:sufficiency}
In a strictly mediated model with $p_\theta(o_{t+1} \mid \hat{s}_t, a_t)$ and $\hat{s}_t \sim p_\theta(\cdot \mid h_t)$, if
$
p_\theta(o_{t+1} \mid h_t, a_t) = p(o_{t+1} \mid h_t, a_t) \quad \forall h_t, a_t,
$
then $\hat{s}_t$ is a sufficient statistic:
$
p(o_{t+1} \mid h_t, a_t) = p(o_{t+1} \mid \hat{s}_t, a_t).
$
\end{proposition}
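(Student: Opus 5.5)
The argument rests on the structure of the strictly mediated model. I treat the joint distribution over $(h_t, \hat{s}_t, a_t, o_{t+1})$ as the one induced by the data distribution on $(h_t,a_t)$, the encoder $p_\theta(\hat{s}_t\mid h_t)$, and the mediated predictor $p_\theta(o_{t+1}\mid \hat{s}_t,a_t)$. By construction, strict mediation means $o_{t+1}$ is generated from $(\hat{s}_t,a_t)$ alone, so $o_{t+1}\perp h_t \mid \hat{s}_t,a_t$ holds \emph{under the model}. If $\hat{s}_t$ is used only through the composed transition and observation maps of Eq.~\eqref{eq:pomdp_marginal}, I would first absorb these into a single kernel
\[
p_\theta(o_{t+1}\mid \hat{s}_t,a_t)=\sum_{\hat{s}_{t+1}} p_\theta(o_{t+1}\mid \hat{s}_{t+1},a_t)\,p_\theta(\hat{s}_{t+1}\mid \hat{s}_t,a_t).
\]
This reduces the statement to the form written in the proposition.

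The key steps run in order. First, write the model's predictive distribution as the mixture
\[
p_\theta(o_{t+1}\mid h_t,a_t)=\sum_{\hat{s}_t} p_\theta(o_{t+1}\mid \hat{s}_t,a_t)\,p_\theta(\hat{s}_t\mid h_t).
\]
Second, define the joint $q(h_t,a_t,\hat{s}_t,o_{t+1}) = p(h_t,a_t)\,p_\theta(\hat{s}_t\mid h_t)\,p(o_{t+1}\mid h_t,a_t)$. This couples the \emph{true} next observation with the encoder's state, and $p(o_{t+1}\mid \hat{s}_t,a_t)$ in the conclusion is a conditional of $q$. Third, use the hypothesis to replace $p(o_{t+1}\mid h_t,a_t)$ by the mixture above. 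Fourth, compute $q(o_{t+1}\mid \hat{s}_t,a_t)$ by Bayes' rule, averaging over the histories compatible with $(\hat{s}_t,a_t)$.

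The obstacle is exactly this last step. A mixture equalling the truth does not in general force every component $p_\theta(o_{t+1}\mid \hat{s}_t,a_t)$ to equal $q(o_{t+1}\mid h_t,a_t,\hat{s}_t)=p(o_{t+1}\mid h_t,a_t)$ pointwise when the encoder is genuinely stochastic. The clean route is to interpret the proposition as concerning the model's own joint distribution, that is, to identify $p$ with $p_\theta$ on $(h_t,a_t,o_{t+1})$ as the hypothesis licenses. Under that joint, $o_{t+1}$ depends on $h_t$ only through $\hat{s}_t$ by the graphical structure $h_t\to\hat{s}_t\to o_{t+1}$ (with $a_t$ as a parent). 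Hence
\[
p(o_{t+1}\mid h_t,\hat{s}_t,a_t)=p_\theta(o_{t+1}\mid \hat{s}_t,a_t)=p(o_{t+1}\mid \hat{s}_t,a_t),
\]
which is the conditional independence $o_{t+1}\perp h_t\mid \hat{s}_t,a_t$ of Definition~\ref{def:sufficiency}. Averaging over $\hat{s}_t\sim p_\theta(\cdot\mid h_t)$ then recovers $p(o_{t+1}\mid h_t,a_t)$ by the hypothesis.

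For the strong pointwise reading, I would add a remark covering the deterministic or encoder-concentrated case, where $\hat{s}_t=f(h_t)$. There the mixture collapses to a single term, and the hypothesis gives $p(o_{t+1}\mid h_t,a_t)=p_\theta(o_{t+1}\mid f(h_t),a_t)$. Because the right-hand side depends on $h_t$ only through $f(h_t)$, the true conditional is constant on fibres of $f$, so $p(o_{t+1}\mid \hat{s}_t,a_t)$ equals it directly. I would state the general result under the model-induced joint and note the deterministic case as the direct pointwise version.
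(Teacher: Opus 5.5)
\textbf{The gap is in the stochastic-encoder case.} Your fallback to the model's own joint $p(h_t,a_t)\,p_\theta(\hat{s}_t\mid h_t)\,p_\theta(o_{t+1}\mid\hat{s}_t,a_t)$ makes $o_{t+1}\perp h_t\mid\hat{s}_t,a_t$ true for \emph{every} $\theta$. Even a constant encoder with an arbitrary decoder satisfies it, so the optimality hypothesis does no work. The hypothesis fixes the $(h_t,a_t,o_{t+1})$-marginal, but not how $\hat{s}_t$ is coupled to the \emph{true} next observation.

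The coupling the proposition is about is $q=p(h_t,a_t)\,p_\theta(\hat{s}_t\mid h_t)\,p(o_{t+1}\mid h_t,a_t)$, in which the encoder's randomness is independent of the world. It is the one used in Proposition~\ref{prop:quantitative}, where $p(o_{t+1}\mid\hat{s}_t,a_t)$ is the average of $p(o_{t+1}\mid h_t,a_t)$ over histories producing $\hat{s}_t$. Under $q$ the statement is false, so the obstacle you flagged is fatal rather than technical. Here is a counterexample:
\begin{itemize}
\item Fix $a_t$ and take equally likely histories $h,h'$ with $p(\cdot\mid h)=(\tfrac12,\tfrac12)$ and $p(\cdot\mid h')=(1,0)$ on $\{1,2\}$.
\item Let the encoder send $h$ to $\hat{s}_1$ or $\hat{s}_2$ with probability $\tfrac12$ each, and send $h'$ to $\hat{s}_1$.
\item Let the decoder be $\delta_1$ at $\hat{s}_1$ and $\delta_2$ at $\hat{s}_2$.
\end{itemize}
Both mixtures are exact, yet $q(o_{t+1}=1\mid\hat{s}_1)=\tfrac{(1/4)(1/2)+(1/2)(1)}{3/4}=\tfrac56\neq\tfrac12=p(o_{t+1}=1\mid h)$. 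So neither $p(o_{t+1}\mid h_t,a_t)=p(o_{t+1}\mid\hat{s}_t,a_t)$ nor $o_{t+1}\perp h_t\mid\hat{s}_t,a_t$ holds. Under the model joint the literal equality fails too, since $p_\theta(o_{t+1}=1\mid\hat{s}_1)=1$.

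\textbf{The paper's proof does not get past this point either.} It claims that optimal mixture prediction forces the encoder to ``separate histories with different predictive distributions.'' From this it concludes $p(o_{t+1}\mid h_t,a_t)=p_\theta(o_{t+1}\mid\hat{s}_t,a_t)$ almost surely under the encoder. That step is unjustified, and the example above refutes it: $h$ and $h'$ share $\hat{s}_1$, have different true predictive distributions, and are both matched exactly. So your suspicion of the mixture step is well placed; what should go is the model-joint reformulation.

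\textbf{What can be salvaged.} Your deterministic-encoder argument is complete and should be stated as the result rather than as a remark. If $\hat{s}_t=f(h_t)$, the hypothesis gives $p(o_{t+1}\mid h_t,a_t)=p_\theta(o_{t+1}\mid f(h_t),a_t)$. This is constant on fibres of $f$ and therefore equals $q(o_{t+1}\mid\hat{s}_t,a_t)$.

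For stochastic encoders you need a stronger, componentwise hypothesis: $p_\theta(o_{t+1}\mid\hat{s}_t,a_t)=p(o_{t+1}\mid h_t,a_t)$ for $p_\theta(\cdot\mid h_t)$-almost every $\hat{s}_t$. This is $\epsilon=0$ in Proposition~\ref{prop:quantitative}, whose KL decomposition then gives $I(o_{t+1};h_t\mid\hat{s}_t,a_t)=0$ under $q$. By convexity of KL, exact mixture prediction is implied by $\epsilon=0$ but does not imply it. That is exactly why the proposition as stated does not follow.
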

\textbf{Proof sketch.}
Under strict mediation, the decoder cannot access $h_t$. If two histories map to the same $\hat{s}_t$ but require different predictions, the model cannot match both, contradicting optimality.
This result is structurally simple, but important: it shows that under strict mediation, predictive accuracy becomes a \emph{diagnostic} for representation quality. Formally, optimal prediction implies $I(o_{t+1}; h_t \mid \hat{s}_t, a_t) = 0$; a full information-theoretic treatment and quantitative relaxation for suboptimal prediction are provided in Appendix~\ref{app:proofs}.
\subsection{The Leaky Bypass Problem}
\label{sec:leaky}
This guarantee is specific to strict mediation. When the decoder can access the history, predictive accuracy places no constraint on the latent state.
\begin{proposition}[Non-identifiability under leaky architectures]
\label{prop:leaky}
For a leaky model $p_\theta(o_{t+1} \mid \hat{s}_t, a_t, h_t)$, there exist parameters achieving optimal prediction while $I(\hat{s}_t; h_t)=0$.
\end{proposition}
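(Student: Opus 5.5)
The proof is by explicit construction: choose an encoder that ignores the history, and a decoder that ignores the latent and reads the history directly. Concretely, fix any single string $s^\ast$ (for instance the empty string) and set $p_\theta(\hat{s}_t \mid h_t) = \delta_{s^\ast}(\hat{s}_t)$ for every $h_t$. Then define the decoder by $p_\theta(o_{t+1} \mid \hat{s}_t, a_t, h_t) := p(o_{t+1} \mid h_t, a_t)$, independent of its first argument. The leaky family allows this because the decoder is conditioned on $h_t$ as well as $\hat{s}_t$. In the LLM setting we need only the mild realizability assumption already implicit in Proposition~\ref{prop:sufficiency}: the model class contains a decoder matching the true history-conditional predictive. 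Equivalently, the stateless model of Section~\ref{sec:arch} is realizable, and the leaky decoder can simulate it while ignoring the latent tokens.

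The first step is to check that this choice predicts optimally. Marginalizing over the latent as in \eqref{eq:pomdp_marginal} gives
$p_\theta(o_{t+1}\mid h_t,a_t)=\sum_{\hat{s}_t} p_\theta(\hat{s}_t\mid h_t)\,p_\theta(o_{t+1}\mid \hat{s}_t,a_t,h_t)=p(o_{t+1}\mid h_t,a_t)$.
This holds for all $h_t, a_t$, so the expected log-likelihood (equivalently, the cross-entropy) attains its minimum, the conditional entropy $H(o_{t+1}\mid h_t,a_t)$.

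The second step is to show that the latent carries no information about the history. Since $\hat{s}_t$ is almost surely the constant $s^\ast$, we have $H(\hat{s}_t)=0$, and therefore $I(\hat{s}_t;h_t)\le H(\hat{s}_t)=0$. The same holds more generally for any encoder whose output distribution does not depend on $h_t$, for example i.i.d.\ noise. This yields a whole family of optimal but uninformative solutions, which is what the non-identifiability claim requires.

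There is no real technical obstacle; the one point needing care is the precise meaning of ``optimal''. I would state it explicitly as: the model's predictive marginal equals the true conditional $p(o_{t+1}\mid h_t,a_t)$, which is the strongest sense available to any architecture. The realizability assumption is also what makes this meaningful. Finally, I would remark on the contrast with Proposition~\ref{prop:sufficiency}. There, the decoder's lack of access to $h_t$ forces sufficiency. Here, adding $h_t$ to the decoder's conditioning set lets the bypass decoder absorb all the predictive information. As a result, the value of the predictive objective cannot distinguish this degenerate solution from one with an informative $\hat{s}_t$.
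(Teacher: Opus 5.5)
Your proposal is correct and takes the same approach as the paper's proof. Both use a point-mass encoder on a fixed string and a decoder set to $p(o_{t+1}\mid h_t,a_t)$, realizable given sufficient capacity, which yields optimal prediction and $I(\hat{s}_t;h_t)=0$; your explicit marginalization check and the extension to any history-independent encoder are harmless additions to that argument.
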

\textbf{Proof sketch.}
Set $\hat{s}_t$ to a constant and predict directly from $(h_t, a_t)$.
\textbf{Implication.}
Predictive accuracy does not guarantee that the latent state is informative when history bypass is allowed. Strict mediation removes the structural mismatch. Full proof in Appendix~\ref{app:proofs}.
\subsection{Rollout Consistency and Stability}
Strict mediation also affects multi-step rollout. The key distinction is whether the information available at training matches rollout conditions.
\begin{proposition}[Training--inference consistency]
\label{prop:consistency}
In strictly mediated models, rollout and training operate on the same inputs $(\hat{s}_t, a_t)$, so one-step errors match. In leaky models, reliance on $h_t$ during training leads to degraded rollout performance.
\end{proposition}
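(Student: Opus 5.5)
We make Proposition~\ref{prop:consistency} precise in two parts and prove each separately. For the first part (consistency under strict mediation) the plan is purely structural. Under the factorization of Section~\ref{sec:arch}, the transition kernel $p_\theta(\hat{s}_{t+1}\mid \hat{s}_t,a_t)$ and the decoder $p_\theta(o_{t+1}\mid \hat{s}_{t+1},a_t)$ take only $(\hat{s}_t,a_t)$ and $(\hat{s}_{t+1},a_t)$ as inputs. During training these inputs are generated by the model's own encoder and transition, and fGRPO samples them on-policy. During rollout the inputs are generated in exactly the same way, by iterating the transition. So the map (input state, action) $\mapsto$ prediction is the same function in both regimes.

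\textbf{Step 1 (one-step error match).} Define the one-step error $\varepsilon(\hat{s},a)$ as a divergence, say total variation, between the model's predictive distribution and the true conditional $p(o_{t+1}\mid \hat{s},a)$. Because the model sees the same argument types at training and at rollout, the training error and the rollout error coincide pointwise in $(\hat{s},a)$. Any difference between the two is then only a distribution shift over the states visited, never a change of input type.

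\textbf{Step 2 (bounded propagation).} Write $\hat{s}_{t+k}$ for the rollout state and $s^\star_{t+k}$ for a reference sufficient state, the one given by Proposition~\ref{prop:sufficiency}. Assume the learned transition is $L$-Lipschitz with $L<1$ in a metric $d$ (contractivity), and let the per-step error be at most $\varepsilon$. A triangle inequality then gives
\[
d(\hat{s}_{t+k},s^\star_{t+k})\le L\, d(\hat{s}_{t+k-1},s^\star_{t+k-1})+\varepsilon .
\]
Unrolling this recursion yields the bound $\varepsilon\sum_{j<k}L^j\le \varepsilon/(1-L)$, which is uniform in the horizon. A Lipschitz decoder converts this state bound into an observation-error bound.

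\textbf{Step 3 (leaky degradation).} For the second part we construct a witness, using Proposition~\ref{prop:leaky}. Take the leaky optimum whose $\hat{s}_t$ is constant and whose decoder reads $h_t$. At rollout the true future history $h_{t+k}$ is unavailable, so it must be replaced by a model-generated history $\tilde{h}_{t+k}$. The decoder was trained only on real histories and is now evaluated off-support, which is exposure bias. Each generated observation is then appended to the conditioning set. The error at step $k$ is therefore driven by the accumulated errors of all prior steps, with an amplification factor $L'\ge 1$, giving error on the order of $\varepsilon (L')^k$. This is the geometric growth claimed in the introduction.

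The main obstacle is making Step 3 rigorous, since "degraded" is not a worst-case statement for every leaky model. A leaky model could, in principle, generalize well to its own generated histories. I would therefore prove an existence statement. Exhibit a simple POMDP, for example a hidden bit revealed once and queried later, together with a leaky optimum whose decoder's behaviour off the support of real histories is arbitrary; one-step optimality places no constraint on it there. Then show that rollout error for this model is bounded below by a constant, while the strictly mediated optimum on the same POMDP has zero rollout error. A secondary subtlety is that contractivity must be assumed rather than derived, and I would state it explicitly as a hypothesis.
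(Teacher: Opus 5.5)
Your Steps 1 and 2 follow the paper's proof. The paper also argues that the strict model conditions on $(\hat{s}_t,a_t)$ in both regimes, so the only multi-step discrepancy is covariate shift over visited latent states. It controls that shift with the contraction recursion of Appendix~\ref{app:rollout}, $Z_{m+1}\le L_S Z_m+\eta$. Three small repairs are needed:
\begin{itemize}
\item The additive term should be the transition's error at the reference state, measured in the latent metric (the paper's $\eta$), not the observation-level divergence $\varepsilon$ of Step 1.
\item The unrolled bound must keep the initial term $L^k d(\hat{s}_t,s^\star_t)$.
\item Proposition~\ref{prop:sufficiency} does not supply a reference trajectory $s^\star_{t+k}$ tracked by the learned transition. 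That is an extra hypothesis, exactly like the paper's assumption (iii).
\end{itemize}

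The genuine gap is the rigorous witness you propose for Step 3; it cannot work. Take the Proposition~\ref{prop:leaky} optimum: constant $\hat{s}_t$, and decoder equal to $p(o_{t+1}\mid h_t,a_t)$ for all $h_t,a_t$.
\begin{itemize}
\item Feeding its own outputs back under the given action sequence, $\hat{o}_{t+1}\sim p(\cdot\mid h_t,a_t)$, $\hat{o}_{t+2}\sim p(\cdot\mid h_t,a_t,\hat{o}_{t+1},a_{t+1})$, and so on, is just the chain-rule factorization of the true future observations. So the synthetic history has exactly the law of a real history.
\item With greedy decoding, each output still has positive probability.
\item Either way, the decoder is never queried off support, so its arbitrary off-support behaviour never matters.
\end{itemize}
In your hidden-bit POMDP this is concrete. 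The bit sits in the true prefix $h_t$, which is retained in every synthetic history $(h_t,a_{t:t+k-1},\hat{o}_{t+1:t+k})$. So the leaky optimum answers the later query correctly, and its rollout error is zero, not bounded below by a constant.

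A one-step-optimal model can fail at rollout only through inputs never seen in training, e.g.\ a carried-forward transition output fed back as a state. But a strict model whose transition was trained only on encoder outputs fails in the same way. So no existence statement about optima separates the two architectures.

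The paper does not prove a lower bound either. Its leaky argument is the informal structural-mismatch claim, backed only by comparing upper bounds in Appendix~\ref{app:rollout}: $(1+L_H^{\mathrm{leaky}})^{m-1}$ growth plus a history-feedback term, versus the contractive strict bound. To make your Step 3 sound, you have two options:
\begin{itemize}
\item Restate it at that level: under the stated Lipschitz hypotheses, the leaky recursion lacks the horizon-uniform bound.
\item Work with $\epsilon$-suboptimal models, where one-step errors genuinely push the synthetic history off distribution.
\end{itemize}
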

Beyond consistency, strict mediation changes how errors propagate. In stateless and leaky models, errors accumulate through an expanding history, leading to worst-case geometric growth. In contrast, strictly mediated models propagate error through a single latent state, enabling bounded error under contraction assumptions. The bounded-error conclusion requires $L_S < 1$, a property of the learned dynamics, not the architecture; strict mediation ensures only the structural form of the recurrence, not the contraction property itself. Table~\ref{tab:error_comparison} summarizes these regimes; full derivations are in Appendix~\ref{app:rollout}.

\section{Enforcing Strict Mediation in the Text Domain}
\label{sec:method}
The preceding analysis establishes why strict mediation matters. We now address the central challenge: enforcing it in the text domain, where latent states are discrete, non-differentiable, and must be trained without variational objectives.
\subsection{Textual Latent States}
\label{sec:why_text}
We represent latent states as text rather than continuous vectors. While classical world models use continuous latents trained with variational objectives such as the ELBO \citep{hafner2019dream, hafner2019planet}, this is best suited to low-dimensional settings. In text-based environments, observations are free-form, horizons are long, and relevant information mixes entities, relations, and attributes, making textual states a better match.
Textual latent states offer four advantages.(1) \textbf{Alignment with LLMs:} encoding, transition, and decoding are implemented as different conditioning modes of a single model, avoiding lossy modality conversions. (2) \textbf{Adaptive bottleneck:} capacity scales with content via variable-length representations, rather than a fixed-dimensional constraint. (3) \textbf{Interpretability:} states are directly inspectable and can be evaluated against ground-truth facts, enabling representation-centric evaluation. (4) \textbf{Modality match:} observations are already semantic text, so textual states preserve their structure.
The main tradeoff is that textual states are discrete and non-differentiable, preventing standard reparameterization-based training. In addition, expressive decoders trained with ELBO-style objectives can ignore the latent state (posterior collapse) \citep{bowman2016sentencevae, he2019deltavae}. Under strict mediation, this immediately harms prediction since there is no history bypass. Overcoming this training difficulty is the core technical problem this paper addresses. We therefore train latent state generation using reinforcement learning (Section~\ref{sec:fgrpo}).
\paragraph{State schema.}
Strict mediation does not require a predefined latent-state vocabulary. In the most general form, the latent state may be any textual object passed between encoding, transition, and decoding. In this work, we instantiate latent states as JSON predicate--argument lists because they are easy to parse, inspect, and compare. For the main experiments, we further restrict predicates to a simulator-aligned vocabulary. This restriction is not required by the model, but enables more comprehensive state-level evaluation (Section~\ref{sec:metrics}).
\subsection{Model Architecture}
\label{sec:architecture_impl}
We implement strict mediation with a single pretrained autoregressive language model with parameters $\theta$. The model is used in three prompt-defined stages with shared parameters:\vspace{-4pt}
\begin{equation*}
\begin{array}{ll@{\qquad}ll}
\text{Encoding:}~\hat{s}_t \sim p_\theta(\cdot \mid h_t), & \text{(3)} &
\text{Predicting:}~\hat{s}_{t+1} \sim p_\theta(\cdot \mid \hat{s}_t, a_t), & \text{(4)} \\[4pt]
\text{Decoding:}~\hat{o}_{t+1} \sim p_\theta(\cdot \mid \hat{s}_{t+1}, a_t). & \text{(5)} & &
\end{array}
\end{equation*}
\addtocounter{equation}{3}
\vspace{-1pt} After $\hat{s}_t$ is generated, the transition and decoding stages receive no access to $h_t$. Thus, the latent state is the only channel from history to future prediction, both during training and inference.
\subsection{Training via Factorized GRPO}
\label{sec:fgrpo}
Section~\ref{sec:architecture_impl} defines the information flow; we now turn to the key algorithmic challenge: training under strict mediation when the latent space is discrete text. Textual latent states are non-differentiable, which precludes gradient-based optimization through the latent space. We treat each generation stage (encoding, transition, and decoding) as a stochastic policy over text sequences and optimize for predictive accuracy using reinforcement learning.
Standard GRPO \citep{shao2024deepseekmath} is incompatible with strict mediation: it operates on a single autoregressive continuation of a fixed prompt, so the transition and decoding steps would access the original history through the shared prompt context. We introduce \textbf{factorized GRPO} (fGRPO, Figure~\ref{fig:combined}), which extends GRPO to tree-structured, multi-prompt rollouts. For a given history $h_t$, we sample $A$ candidate latent states, then $B$ candidate next states per encoding, then $C$ candidate observations per transition, forming a tree with $h_t$ as root and $\hat{o}_{t+1}^{(a,b,c)}$ as leaves. From a single tuple $(h_t, a_t, o_{t+1})$, this yields $1 + A + AB$ GRPO groups (Figure~1d).
Rewards are computed at the leaves by comparing $\hat{o}_{t+1}^{(a,b,c)}$ with $o_{t+1}$ and propagated upward by averaging over children so each parent takes the mean reward of its immediate descendants; advantages are computed group-relative within each level. All groups are aggregated into a single batch to update the shared LLM parameters.
\subsection{Reward Design}
\label{sec:reward}
The reward signal combines predictive accuracy with structural regularization of the latent states. We summarize the key components below; full details are provided in Appendix~\ref{app:reward}.
\textbf{Observation accuracy.} The primary reward encourages latent states that support accurate next-observation prediction under strict mediation. Given a predicted observation $\hat{o}_{t+1}$ and ground truth $o_{t+1}$, we define $R_{\text{obs}}$ as the fact-level F1 score between the extracted fact sets of the two observations. This directly measures whether the model predicts the correct underlying environment facts, independent of surface form.
\textbf{Format validity.} A binary indicator $R_{\mathrm{fmt}}(\hat{s}) \in \{0, 1\}$ checks whether the generated latent state satisfies required structural constraints. For a rollout branch, validity is enforced jointly across stages so all generated latent states must be valid; invalid states receive a penalty on the observation reward rather than an additive bonus, preventing the model from producing well-formed but semantically vacuous states.
\textbf{Auxiliary state rewards.} We introduce three auxiliary rewards for valid states: (i) a support reward $R_{\text{sup}}$ that encourages consistency between the predicted observation and the underlying state facts (both derived from text); (ii) a deduplication reward $R_{\text{dedup}}$ that penalizes semantically redundant facts in the latent state; and (iii) an ablation reward $R_{\text{abl}}$ that measures the fraction of facts in the latent state whose removal degrades prediction accuracy, encouraging compactness. The computation of these rewards does not rely on the access to hidden environment states.
\textbf{Combined reward.} The final reward is:
\begin{equation}
R = \begin{cases}
R_{\text{obs}} \cdot \gamma_{\text{pen}}, & \text{if } R_{\text{fmt}} = 0, \\
R_{\text{obs}} + \lambda_{\text{dedup}} R_{\text{dedup}} + \lambda_{\text{abl}} R_{\text{abl}} + \lambda_{\text{sup}} R_{\text{sup}}, & \text{if } R_{\text{fmt}} = 1,
\end{cases}
\end{equation}
where $\gamma_{\text{pen}} \in [0,1]$ is a penalty factor and $\lambda_{\text{dedup}}, \lambda_{\text{abl}}, \lambda_{\text{sup}} \geq 0$ control the auxiliary reward weights.

\section{Experimental Evaluation}
Our evaluation targets \emph{representation quality}, rather than downstream control or planning performance. Under strict mediation, representation properties become empirically identifiable: if a latent state is sufficient, compact, and stable, these properties should manifest directly in prediction behavior.
\subsection{Datasets}
\label{sec:datasets}
Our evaluation requires environments that are partially observed for the agent but fully instrumentable for the developer: the model sees only textual observations, while we access the complete Markov state for evaluation. We use two text-based benchmarks: \textsc{TextWorld} \citep{cote2018textworld} and \textsc{ScienceWorld} \citep{wang2022scienceworld}. Both define a fully specified symbolic state but render only a lossy natural-language observation at each step; a single observation is generally insufficient to identify the underlying state.
We generate offline datasets by executing a diverse set of behavior policies (expert, $\epsilon$-greedy, exploration-then-oracle, random walk, and failure modes) and logging trajectories of observations and actions. For \textsc{TextWorld}, we generate $2,726$ trajectories ($121,054$ steps) across $261$ game configurations varying in world size ($1$--$12$ rooms), recipe length ($1$--$5$ ingredients), and skill requirements, with $24$ random seeds per configuration. For \textsc{ScienceWorld}, we generate $600$ trajectories ($19,481$ steps) across $30$ task types. Average trajectory length is $45$ steps for \textsc{TextWorld} and $33$ for \textsc{ScienceWorld}. The main training set contains $2,287$ trajectories ($102,920$ steps) for TW and $540$ trajectories ($17,573$ steps) for SW, with permanent holdout sets of $439$ TW trajectories and $60$ SW trajectories reserved for final evaluation. All model training is strictly offline, without further environment interaction. Full details on data generation, complexity sweeps, noise injection, ground-truth logging, and holdout design are provided in Appendix~\ref{app:data}.

\subsection{Baselines}
\label{sec:baselines}
We compare against three baselines corresponding to the architectural formulations in Section~\ref{sec:arch}, isolating the effect of latent state mediation and representation choice. \texttt{Stateless} follows the history-based formulation, predicting the next observation directly from the full history and action via supervised fine-tuning. It serves as an upper bound on one-step predictive performance but does not learn an explicit representation. \texttt{LeakyState} follows the leaky formulation, introducing a textual latent state while allowing prediction to condition on both the state and the full history; it is trained using standard GRPO without enforcing strict mediation, so the model can rely on the history bypass as discussed in Proposition~\ref{prop:leaky}.
StrictState-Emb follows the strict formulation but replaces textual latent states with continuous embeddings, as a fixed number of soft tokens prepended to the decoder input, trained via a variational objective $\mathcal{L}_{\text{ELBO}} = \mathbb{E}_{q_\phi(s_t|h_t)}[\log p_\theta(o_{t+1} | s_t, a_t)] - \beta \, \mathrm{KL}(q_\phi(s_t | h_t) \| p_\theta(s_t | s_{t-1}, a_{t-1}))$, mirroring classical world models such as RSSM and Dreamer.

\subsection{Evaluation Metrics}
\label{sec:metrics}
Our evaluation targets representation quality rather than downstream control, using three metrics. \textbf{Observation-level F1} (\texttt{obs.F1}) measures next-observation prediction quality: we extract structured facts (predicates and arguments) from both the predicted and ground-truth observations and compute F1 over the extracted fact sets (Appendix~\ref{app:fact_extraction}), directly measuring whether the model predicts the correct environment facts rather than matching surface text.
\textbf{State-level F1} (\texttt{state.F1}) evaluates latent states by parsing them into structured predicates and computing F1 against the subset of ground-truth facts needed to render the next observation\footnote{Ground-truth states are hidden from the model and used only for final evaluation.}. This rendering-derived subset provides an exact sufficiency target without requiring irrelevant simulator variables (Appendix~\ref{app:data}).
We additionally report the \textbf{compression ratio} (CR), defined as the ratio between the cumulative history length and the latent state length, measuring how aggressively the model compresses history into its representation. Finally, to evaluate temporal consistency, we measure \texttt{state.F1} across \textbf{multi-step rollouts} at increasing prediction horizons $H$; degradation as $H$ increases reflects the accumulation of prediction errors and allows comparison of rollout stability across models.
We use greedy decoding  during evaluation and report $\pm 1$ standard error over test examples as error bars\footnote{
Although fGRPO trains a stochastic text policy, sampling is used for exploration and advantage estimation. Evaluation uses the deterministic greedy policy to remove sampling variance and isolate whether the learned state mediates prediction.
}.

\subsection{Results}
\label{sec:results}
Our experiments evaluate whether enforcing strict mediation via fGRPO preserves predictive accuracy (Prop~\ref{prop:sufficiency}), yields more informative latent states (Prop~\ref{prop:leaky}), and improves rollout stability (Prop~\ref{prop:consistency}).
\paragraph{One-step prediction and state quality.} Table~\ref{tab:vs_n_ref} reports performance across architectures at increasing compositional difficulty $M$ (we exclude trivial cases with $M < 2$). StrictState-Text matches obs.F1 of LeakyState and Stateless baselines, confirming that strict mediation does not sacrifice predictive accuracy (Proposition~\ref{prop:sufficiency}).
The critical difference is in representation quality: under matched obs.F1, StrictState-Text achieves higher state.F1, and the advantage is largest in high-compositional settings rather than uniformly determined by model scale (e.g., up to +30\% for 0.5B, +21\% for 4B and +25\% for 32B on TextWorld).
This validates Proposition~\ref{prop:leaky}: leaky models predict well while learning uninformative states, because the history bypass absorbs the predictive burden.  A qualitative comparison of learned latent states is provided in Appendix~\ref{app:qualitative}.
Figure~\ref{fig:cr} further shows that StrictState better calibrates compression to reference complexity ($\rho=0.62$ vs.\ $0.18$), while LeakyState often over-compresses, producing states that are too short when more predictive facts are needed.
StrictState-Text also matches StrictState-Emb on obs.F1, suggesting textual states provide a more flexible bottleneck than fixed-dimensional embeddings for discrete, compositional facts.
\begin{table}[t]
\centering
\scriptsize
\setlength{\tabcolsep}{3.5pt}
\renewcommand{\arraystretch}{0.9}
\begin{tabular}{@{}c l c c c c c c@{}}
\toprule
\multirow{2}{*}{base} & \multirow{2}{*}{method}
& \multicolumn{3}{c}{obs.F1}
& \multicolumn{3}{c}{state.F1} \\
\cmidrule(lr){3-5} \cmidrule(lr){6-8}
& & $M\geq2$ & $M\geq3$ & $M\geq4$
  & $M\geq2$ & $M\geq3$ & $M\geq4$ \\
\midrule
\multicolumn{8}{c}{\textbf{TextWorld}} \\
\midrule
\multirow{4}{*}{Qwen2.5-0.5B}
& no-state & $0.529^{{\pm 0.005}}$ & $0.427^{{\pm 0.007}}$ & $0.413^{{\pm 0.009}}$ & - & - & - \\
& emb-state-strict & $0.501^{{\pm 0.006}}$ & $0.419^{{\pm 0.008}}$ & $0.391^{{\pm 0.008}}$ & - & - & - \\
& text-state-leaky & $0.520^{{\pm 0.005}}$ & $0.429^{{\pm 0.005}}$ & $0.402^{{\pm 0.004}}$ & $0.265^{{\pm 0.005}}$ & $0.191^{{\pm 0.005}}$ & $0.170^{{\pm 0.006}}$ \\
& text-state-strict  & $0.525^{{\pm 0.006}}$ & $0.434^{{\pm 0.007}}$ & $0.397^{{\pm 0.007}}$
        & $\mathbf{0.290}^{{\pm 0.004}}_{(+9\%)}$
        & $\mathbf{0.236}^{{\pm 0.005}}_{(+24\%)}$
        & $\mathbf{0.221}^{{\pm 0.006}}_{(+30\%)}$ \\
\midrule
\multirow{4}{*}{Qwen3-4B}
& no-state & $0.685^{{\pm 0.006}}$ & $0.642^{{\pm 0.008}}$ & $0.634^{{\pm 0.008}}$ & - & - & - \\
& emb-state-strict & $0.609^{{\pm 0.007}}$ & $0.558^{{\pm 0.008}}$ & $0.503^{{\pm 0.009}}$ & - & - & - \\
& text-state-leaky & $0.665^{{\pm 0.006}}$ & $0.639^{{\pm 0.007}}$ & $0.633^{{\pm 0.005}}$ & $0.294^{{\pm 0.006}}$ & $0.281^{{\pm 0.006}}$ & $0.271^{{\pm 0.007}}$ \\
& text-state-strict  & $0.680^{{\pm 0.006}}$ & $0.631^{{\pm 0.007}}$ & $0.614^{{\pm 0.008}}$
        & $\mathbf{0.310}^{{\pm 0.007}}_{(+5\%)}$
        & $\mathbf{0.326}^{{\pm 0.007}}_{(+16\%)}$
        & $\mathbf{0.329}^{{\pm 0.008}}_{(+21\%)}$ \\
\midrule
 \multirow{4}{*}{Qwen3-32B}
 & no-state & $0.762^{\pm 0.005}$ & $0.673^{\pm 0.008}$ & $0.655^{\pm 0.009}$ & - & - & - \\
 & emb-state-strict & $0.641^{\pm 0.008}$ & $0.612^{\pm 0.008}$ & $0.547^{\pm 0.009}$ & - & - & - \\
 & text-state-leaky
 & $0.723^{\pm 0.005}$ & $0.698^{\pm 0.008}$ & $0.693^{\pm 0.009}$ & $0.392^{\pm 0.005}$ & $0.419^{\pm 0.007}$ & $0.425^{\pm 0.006}$ \\
 & text-state-strict
 & $0.752^{\pm 0.006}$ & $0.710^{\pm 0.007}$ & $0.706^{\pm 0.011}$
 & $\mathbf{0.411}^{\pm 0.005}_{(+5\%)}$
 & $\mathbf{0.494}^{\pm 0.006}_{(+18\%)}$
 & $\mathbf{0.531}^{\pm 0.007}_{(+25\%)}$ \\
\midrule
\multicolumn{8}{c}{\textbf{ScienceWorld}} \\
\midrule
& & $M\geq5$ & $M\geq10$ & $M\geq30$
  & $M\geq5$ & $M\geq10$ & $M\geq30$ \\
\midrule
\multirow{4}{*}{Qwen2.5-0.5B}
 & no-state & {$0.520^{\pm 0.014}$} & {$0.297^{\pm 0.016}$} & {$0.197^{\pm 0.019}$} & - & - & - \\
 & emb-state-strict & {$0.408^{\pm 0.015}$} & {$0.265^{\pm 0.017}$} & {$0.130^{\pm 0.018}$} & - & - & - \\
 & text-state-leaky
 & {$0.517^{\pm 0.014}$} & {$0.291^{\pm 0.016}$} &{$0.167^{\pm 0.018}$} & {$0.298^{\pm 0.015}$} & {$0.203^{\pm 0.018}$} & {$0.070^{\pm 0.020}$} \\
 & text-state-strict
 & {$0.512^{\pm 0.013}$}
 & {$0.280^{\pm 0.015}$}
 & {$0.200^{\pm 0.019}$}
 & {$\mathbf{0.323}^{\pm 0.014}_{(+8\%)}$}
 & {$\mathbf{0.254}^{\pm 0.017}_{(+25\%)}$}
 & {$\mathbf{0.102}^{\pm 0.021}_{(+46\%)}$} \\
\midrule
\multirow{4}{*}{Qwen3-4B}
& no-state & $0.789^{\pm 0.017}$ & $0.765^{\pm 0.019}$ & $0.562^{\pm 0.022}$ & - & - & - \\
& emb-state-strict & {$0.651^{\pm 0.018}$} & {$0.631^{\pm 0.020}$} & {$0.309^{\pm 0.023}$} & - & - & - \\
& text-state-leaky & $0.722^{\pm 0.016}$ & $0.730^{\pm 0.018}$ & $0.356^{\pm 0.022}$ & $0.533^{\pm 0.015}$ & $0.560^{\pm 0.018}$ & $0.365^{\pm 0.024}$ \\
& text-state-strict  & $0.774^{\pm 0.016}$
        & $0.793^{\pm 0.018}$
        & $0.728^{\pm 0.021}$
        & $\mathbf{0.573}^{\pm 0.014}_{(+8\%)}$
        & $\mathbf{0.620}^{\pm 0.017}_{(+11\%)}$
        & $\mathbf{0.574}^{\pm 0.027}_{(+57\%)}$ \\
\midrule
 \multirow{4}{*}{Qwen3-32B}
 & no-state & $0.783^{\pm 0.018}$ & $0.778^{\pm 0.020}$ & $0.565^{\pm 0.023}$ & - & - & - \\
 & emb-state-strict & $0.681^{\pm 0.019}$ & $0.685^{\pm 0.021}$ & $0.487^{\pm 0.024}$ & - & - & - \\
 & text-state-leaky
 & $0.770^{\pm 0.017}$ & $0.784^{\pm 0.019}$ & $0.561^{\pm 0.022}$ & $0.583^{\pm 0.016}$ & $0.610^{\pm 0.019}$ & $0.415^{\pm 0.025}$ \\
 & text-state-strict
 & $0.781^{\pm 0.016}$
 & $0.795^{\pm 0.018}$
 & $0.775^{\pm 0.022}$
 & $\mathbf{0.623}^{\pm 0.015}_{(+7\%)}$
 & $\mathbf{0.670}^{\pm 0.018}_{(+10\%)}$
 & $\mathbf{0.624}^{\pm 0.024}_{(+50\%)}$ \\
\bottomrule
\end{tabular}
\vspace{6pt}
\caption{\textbf{Performance across base models and compositional difficulty.}
$M$ counts facts needed to render an observation; larger $M$ means higher complexity. Thresholds are chosen separately by within-environment quantiles to reflect each benchmark's scale.
}
\label{tab:vs_n_ref}
\vspace{-3mm}
\end{table}

\paragraph{Rollout stability.}
Table~\ref{tab:vs_future} reports performance across increasing prediction horizons $H$. StrictState-Text consistently outperforms LeakyState, with gains growing from +5\% at $H{=}1$ to +98\% at $H{=}9$ on TextWorld (4B model). This supports Proposition~\ref{prop:consistency}: leaky models suffer from training-inference mismatch, relying on history during training that is unavailable during rollout, while strict models operate under identical conditions in both regimes. Consistent with the error propagation analysis in Appendix~\ref{app:rollout}, the gap widens with horizon, confirming that strict mediation mitigates cumulative error propagation. The long-horizon gains further suggest that the compactness rewards (e.g., $R_\text{abl}$) do not simply collapse the textual state to immediately visible facts; future-relevant facts can still be retained when they support the learned rollout dynamics.
\begin{table}[t]
\centering
 \scriptsize
\setlength{\tabcolsep}{3.5pt}
\renewcommand{\arraystretch}{0.95}
\begin{tabular}{@{}c l c c c c c c@{}}
\toprule
\multirow{2}{*}{base} & \multirow{2}{*}{method}
& \multicolumn{3}{c}{obs.F1}
& \multicolumn{3}{c}{state.F1} \\
\cmidrule(lr){3-5} \cmidrule(lr){6-8}
 &  & $H{=}1$ & $H{=}5$ & $H{=}9$
 & $H{=}1$ & $H{=}5$ & $H{=}9$ \\
\midrule
\multicolumn{8}{c}{\textbf{TextWorld}} \\
\midrule
\multirow{4}{*}{Qwen2.5-0.5B}
 & no-state & $0.632^{\pm 0.029}$ & $0.511^{\pm 0.030}$ & $0.397^{\pm 0.031}$ & - & - & - \\
 & emb-state-strict & $0.601^{\pm 0.030}$ & $0.503^{\pm 0.029}$ & $0.401^{\pm 0.031}$ & - & - & - \\
 & text-state-leaky
 & $0.621^{\pm 0.028}$ & $0.504^{\pm 0.027}$ & $0.383^{\pm 0.030}$
 & ${0.255}^{\pm 0.016}$ & $0.176^{\pm 0.018}$ & $0.094^{\pm 0.020}$ \\
 & text-state-strict
 & $0.619^{\pm 0.029}$ & $0.524^{\pm 0.027}$ & $0.401^{\pm 0.028}$
 & $\mathbf{0.291}^{\pm 0.014}_{(+14\%)}$
 & $\mathbf{0.252}^{\pm 0.015}_{(+43\%)}$
 & $\mathbf{0.111}^{\pm 0.012}_{(+18\%)}$ \\
\midrule
\multirow{4}{*}{Qwen3-4B}
 & no-state & $0.728^{\pm 0.027}$ & $0.600^{\pm 0.030}$ & $0.505^{\pm 0.033}$ & - & - & - \\
 & emb-state-strict & $0.678^{\pm 0.032}$ & $0.512^{\pm 0.033}$ & $0.402^{\pm 0.034}$ & - & - & - \\
 & text-state-leaky
 & $0.693^{\pm 0.027}$ & $0.549^{\pm 0.029}$ & $0.433^{\pm 0.030}$
 & ${0.290}^{\pm 0.019}$ & $0.201^{\pm 0.019}$ & $0.106^{\pm 0.022}$ \\
 & text-state-strict
 & $0.729^{\pm 0.025}$ & $0.608^{\pm 0.026}$ & $0.570^{\pm 0.031}$
 & $\mathbf{0.305}^{\pm 0.012}_{(+5\%)}$
 & $\mathbf{0.277}^{\pm 0.014}_{(+38\%)}$
 & $\mathbf{0.210}^{\pm 0.014}_{(+98\%)}$ \\
\midrule
\multirow{4}{*}{Qwen3-32B}
 & no-state & $0.731^{\pm 0.026}$ & $0.617^{\pm 0.028}$ & $0.543^{\pm 0.030}$ & - & - & - \\
 & emb-state-strict & $0.703^{\pm 0.029}$ & $0.538^{\pm 0.031}$ & $0.419^{\pm 0.032}$ & - & - & - \\
 & text-state-leaky
 & $0.718^{\pm 0.026}$ & $0.583^{\pm 0.028}$ & $0.497^{\pm 0.029}$
 & $0.312^{\pm 0.018}$ & $0.243^{\pm 0.019}$ & $0.142^{\pm 0.021}$ \\
 & text-state-strict
 & $0.726^{\pm 0.024}$ & $0.659^{\pm 0.025}$ & $0.601^{\pm 0.029}$
 & $\mathbf{0.331}^{\pm 0.013}_{(+6\%)}$
 & $\mathbf{0.308}^{\pm 0.014}_{(+27\%)}$
 & $\mathbf{0.247}^{\pm 0.013}_{(+74\%)}$ \\
\midrule
\multicolumn{8}{c}{\textbf{ScienceWorld}} \\
\midrule
\multirow{4}{*}{Qwen2.5-0.5B}
 & no-state & {$0.565^{\pm 0.033}$} & {$0.341^{\pm 0.035}$} & {$0.201^{\pm 0.036}$} & - & - & - \\
 & emb-state-strict & {$0.438^{\pm 0.034}$} & {$0.290^{\pm 0.035}$} & {$0.203^{\pm 0.036}$} & - & - & - \\
 & text-state-leaky & {$0.560^{\pm 0.032}$} & {$0.351^{\pm 0.034}$} & {$0.223^{\pm 0.033}$}
 & {$0.301^{\pm 0.034}$} & {$0.211^{\pm 0.033}$} & {$0.104^{\pm 0.035}$} \\
 & text-state-strict  & {$0.556^{\pm 0.035}$} & {$0.361^{\pm 0.036}$} & {$0.250^{\pm 0.037}$}
 & {$\mathbf{0.331}^{\pm 0.033}_{(+10\%)}$}
 & {$\mathbf{0.250}^{\pm 0.035}_{(+18\%)}$}
 & {$\mathbf{0.145}^{\pm 0.036}_{(+39\%)}$} \\
\midrule
\multirow{4}{*}{Qwen3-4B}
 & no-state & $0.793^{{\pm 0.032}}$ & $0.728^{{\pm 0.033}}$ & $0.654^{{\pm 0.033}}$ & - & - & - \\
 & emb-state-strict & {$0.615^{\pm 0.032}$} & {$0.608^{\pm 0.033}$} & {$0.597^{\pm 0.034}$} & - & - & - \\
 & text-state-leaky & $0.695^{{\pm 0.031}}$ & $0.636^{{\pm 0.034}}$ & $0.678^{{\pm 0.035}}$
 & $0.502^{{\pm 0.036}}$ & $0.251^{{\pm 0.038}}$ & $0.218^{{\pm 0.030}}$ \\
 & text-state-strict  & $0.788^{{\pm 0.032}}$ & $0.792^{{\pm 0.033}}$ & $0.775^{{\pm 0.033}}$
 & $\mathbf{0.536}^{{\pm 0.036}}_{(+7\%)}$
 & $\mathbf{0.482}^{{\pm 0.039}}_{(+92\%)}$
 & $\mathbf{0.418}^{{\pm 0.030}}_{(+91\%)}$\\
\midrule
 \multirow{4}{*}{Qwen3-32B}
 & no-state & $0.823^{\pm 0.031}$ & $0.776^{\pm 0.033}$ & $0.703^{\pm 0.034}$ & - & - & - \\
 & emb-state-strict & $0.702^{\pm 0.032}$ & $0.687^{\pm 0.035}$ & $0.671^{\pm 0.034}$ & - & - & - \\
 & text-state-leaky
 & $0.728^{\pm 0.031}$ & $0.671^{\pm 0.032}$ & $0.694^{\pm 0.035}$
 & $0.557^{\pm 0.035}$ & $0.298^{\pm 0.037}$ & $0.263^{\pm 0.030}$ \\
 & text-state-strict
 & $0.822^{\pm 0.031}$ & $0.819^{\pm 0.033}$ & $0.798^{\pm 0.033}$
 & $\mathbf{0.598}^{\pm 0.034}_{(+7\%)}$
 & $\mathbf{0.547}^{\pm 0.038}_{(+84\%)}$
 & $\mathbf{0.501}^{\pm 0.030}_{(+90\%)}$ \\
\bottomrule
\end{tabular}
\vspace{6pt}
\caption{\textbf{Performance across base models and prediction horizons.}
We report observation and state prediction performance at increasing horizons $H$.}
\label{tab:vs_future}
\vspace{-3mm}
\end{table}

\paragraph{Reward ablation.}
Table~\ref{tab:ablation_rewards} evaluates the contribution of each reward component. At horizon $H{=}1$, obs.F1 is comparable across configurations, but differences emerge at longer horizons, revealing that the auxiliary rewards improve the state's capacity to support multi-step rollout. Observation accuracy alone ($R_{\text{obs}}$) produces largely invalid latent states; adding format validity ($R_{\text{fmt}}$) enforces structure but reduces obs.F1, as the model must now route prediction through a structured bottleneck. The support reward ($R_{\text{sup}}$) is critical: it recovers short-horizon obs.F1 and produces the largest long-horizon gains ($H{=}9$: $0.345 \rightarrow 0.541$), indicating that aligning the latent state with predicted observations directly improves rollout stability. Deduplication ($R_{\text{ded}}$) and ablation ($R_{\text{abl}}$) further increase compactness (CR: $24.3 \rightarrow 58.2$) while maintaining long-horizon performance, yielding states that are both minimal and predictively sufficient over extended rollouts.

\vspace{-2mm}
\section{Related Work}
\label{sec:related}
\vspace{-2mm}
\begin{figure}[t]
\centering
\begin{minipage}[b]{0.47\textwidth}
\centering
\scriptsize
\setlength{\tabcolsep}{2.5pt}
\begin{tabular}{lccc c}
\toprule
\textbf{Ablation} & \multicolumn{3}{c}{\textbf{Obs.F1}} & \textbf{CR} \\
\cmidrule(lr){2-4}
 & $H{=}1$ & $H{=}5$ & $H{=}9$ & \\
\midrule
$R_{\text{obs}}$ only
& $0.733^{\pm .031}$
& $0.494^{\pm .029}$
& $0.408^{\pm .032}$
& -- \\
\ $+R_{\text{fmt}}$
& $0.685^{\pm .027}$
& $0.468^{\pm .026}$
& $0.345^{\pm .027}$
& 47.8 \\
\ \ $+R_{\text{sup}}$
& $0.730^{\pm .024}$
& $0.597^{\pm .029}$
& $0.541^{\pm .031}$
& 24.3 \\
\ \ \ $+R_{\text{ded}}$
& $0.729^{\pm .025}$
& $0.607^{\pm .030}$
& $0.562^{\pm .033}$
& 46.2 \\
\ \ \ \ $+R_{\text{abl}}$
& $0.729^{\pm .025}$
& $0.608^{\pm .026}$
& $0.570^{\pm .031}$
& 58.2 \\
\bottomrule
\end{tabular}
\vspace{2pt}
\captionof{table}{Ablation over reward components on TextWorld with Qwen3-4B. Each row adds one reward to previous configs. Full reward in Appendix~\ref{app:reward}.}
\label{tab:ablation_rewards}
\end{minipage}
\hfill
\begin{minipage}[b]{0.5\textwidth}
\centering
\includegraphics[width=1\textwidth,height=0.65\textwidth,keepaspectratio]{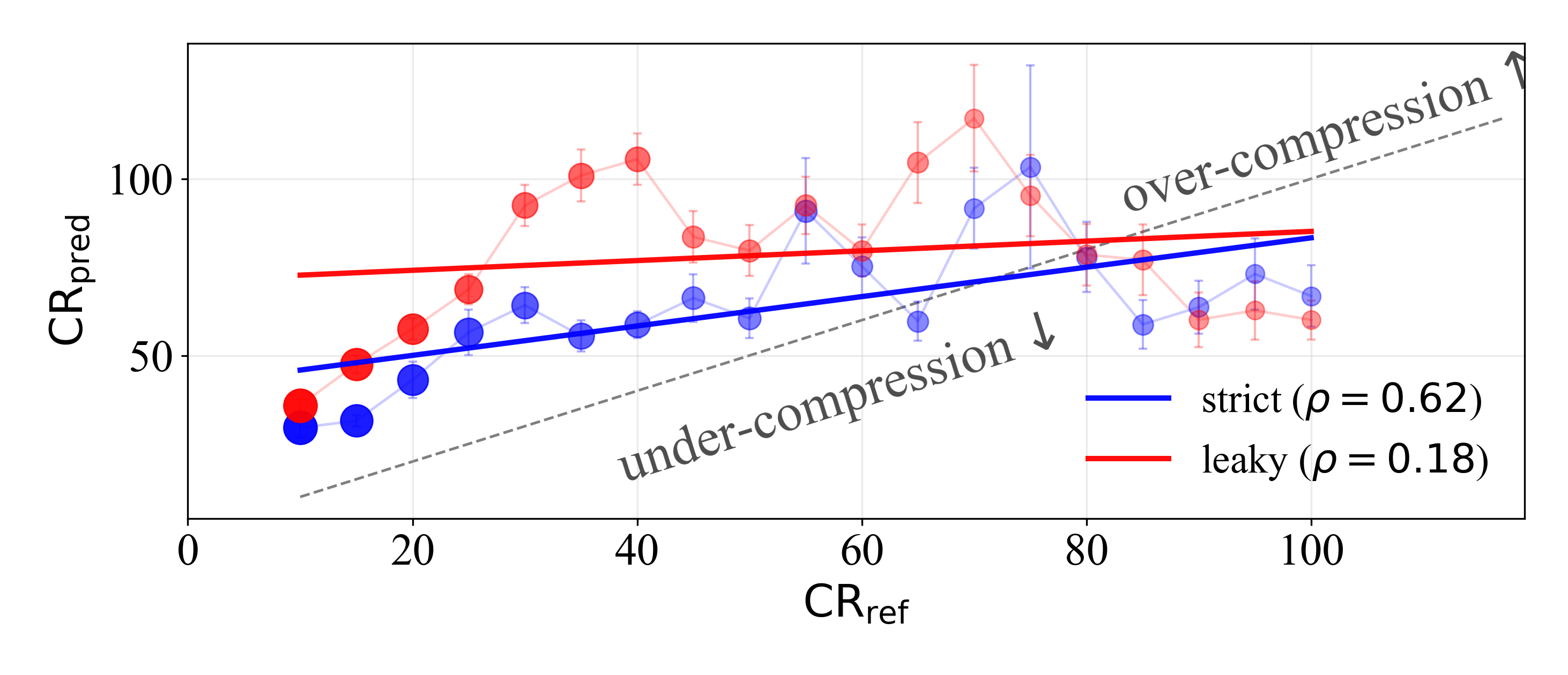}
\captionof{figure}{Compression ratio. $\mathrm{CR} = \tfrac{\text{history words}}{\text{state words}}$, with $\mathrm{CR}_{\mathrm{ref}}$ by ground truth, $\mathrm{CR}_{\mathrm{pred}}$ by predictions. Points are binned means; solid lines are linear fits (legend: Pearson $\rho$). Dashed line: $\mathrm{CR}_{\mathrm{pred}}{=}\mathrm{CR}_{\mathrm{ref}}$.}
\label{fig:cr}
\end{minipage}
\end{figure}

\textbf{Latent world models.} Classical world models such as RSSM \citep{hafner2019planet}, Dreamer \cite{hafner2019dream, hafner2020mastering, hafner2025training}, and MuZero \citep{schrittwieser2020mastering} learn continuous latent states with explicit transition models, enforcing a Markov factorization in which predictions depend only on the current state and action. Our work shares this commitment to strict mediation but operates in the text domain, where latent states are discrete and variational training is poorly matched (Section~\ref{sec:why_text}). More recent approaches relax this factorization: DreamerV4 \citep{hafner2025training} conditions on the full sequence of past latent embeddings, and PAN \citep{xiang2025pan} uses an autoregressive LLM backbone over the entire latent history. Our analysis in Section~\ref{sec:leaky} formalizes the consequence: when the architecture permits history bypass, predictive accuracy does not constrain the representation.
\textbf{LLMs as world models.} RAP \citep{hao2023reasoning} formulates reasoning as planning over textual states using a frozen LLM, enforcing strict mediation by construction but operating in fully observed MDPs without learning. \citet{wang2024can} evaluate LLMs on single-step state prediction given the full ground-truth state. \citet{li2025word} fine-tune LLMs on offline trajectories but condition on the full observation history without learning a compressed latent state. In all cases, either partial observability is absent, or no explicit latent representation is learned, or both. Our work addresses both. Since even a fine-tuned history-conditioned model fails to produce identifiable representations, the limitation is structural rather than a matter of model capacity or prompting strategy.
\textbf{Representation learning for sequential decision-making.} The information bottleneck principle \citep{tishby2000} provides the theoretical foundation for compressed representations that retain task-relevant information. JEPA-style methods \citep{lecun2022path} learn representations via embedded prediction rather than reconstruction. Our approach retains this signal but replaces variational training with reinforcement learning and enforces strict mediation to ensure identifiable sufficiency. Detailed comparison with prior work is in Table~\ref{tab:relatedwork} in appendix.

\vspace{-2mm}
\section{Conclusion}
\label{sec:conc}
We have shown that strict latent state mediation can be effectively enforced in text-based world models through factorized GRPO, yielding identifiable, interpretable representations whose quality is directly diagnosable from predictive performance. Confirmed empirically across compositional difficulty levels and prediction horizons, the resulting textual latent states compose entities, relations, and attributes as variable-length text, opening the door to LLM-grounded reward signals in settings where precise reward models are unavailable. The natural next steps are extending strict mediation to recurrent or hierarchical encoders that process history incrementally, and demonstrating that the representation-level gains established here translate into improved planning and task completion.
\textbf{Limitations.} Our evaluation focuses on representation quality rather than downstream control or planning, and is limited to two synthetic, instrumented environments (TextWorld and ScienceWorld), which may not fully reflect real-world complexity. Two structural limitations also merit discussion. First, strict mediation places the full burden of history compression on a single-shot encoder with no incremental refinement, so any missed information is irrecoverable, since removing history bypass also removes the decoder's ability to compensate for encoder errors. Second, while strict mediation and fGRPO are environment-agnostic, the auxiliary rewards that shape representations into interpretable, compact forms rely on structured fact extraction; extending these to less structured domains will require alternatives like learned similarity metrics or LLM-based evaluation.
\textbf{Broader Impacts}. This work is primarily methodological and may improve the reliability and interpretability of AI systems. Potential risks include misuse of more capable generative models, though our study is limited to offline benchmarks and does not involve deployment.

\bibliographystyle{unsrtnat}
\bibliography{reference}
\appendix
\newpage
\section{Proofs and Information-Theoretic Analysis}
\label{app:proofs}
\subsection{Full Proof of Proposition~\ref{prop:sufficiency}}
\begin{proof}
Under strict mediation, the model's predictive distribution for a given history $h_t$ and action $a_t$ is obtained by marginalizing over the encoder:
\begin{equation}
p_\theta(o_{t+1} \mid h_t, a_t) = \sum_{\hat{s}_t} p_\theta(o_{t+1} \mid \hat{s}_t, a_t) \, p_\theta(\hat{s}_t \mid h_t),
\end{equation}
where $p_\theta(o_{t+1} \mid \hat{s}_t, a_t, h_t) = p_\theta(o_{t+1} \mid \hat{s}_t, a_t)$ by the strict mediation constraint.
Suppose the model achieves optimal prediction: $p_\theta(o_{t+1} \mid h_t, a_t) = p(o_{t+1} \mid h_t, a_t)$ for all $h_t, a_t$. We show that the latent state is a sufficient statistic, i.e., $p(o_{t+1} \mid h_t, a_t) = p(o_{t+1} \mid \hat{s}_t, a_t)$ almost surely under the encoder.
For any latent state $\hat{s}_t$ in the support of $p_\theta(\cdot \mid h_t)$, define the set of histories that can produce it: $\mathcal{H}(\hat{s}_t) = \{h_t : p_\theta(\hat{s}_t \mid h_t) > 0\}$. For each $h_t \in \mathcal{H}(\hat{s}_t)$, the optimal prediction condition gives:
\begin{equation}
p(o_{t+1} \mid h_t, a_t) = \sum_{\hat{s}} p_\theta(o_{t+1} \mid \hat{s}, a_t) \, p_\theta(\hat{s} \mid h_t).
\end{equation}
Now suppose, for contradiction, that there exist $h_t, h_t' \in \mathcal{H}(\hat{s}_t)$ with $p(o_{t+1} \mid h_t, a_t) \neq p(o_{t+1} \mid h_t', a_t)$. Consider the difference:
\begin{equation}
p(o_{t+1} \mid h_t, a_t) - p(o_{t+1} \mid h_t', a_t) = \sum_{\hat{s}} p_\theta(o_{t+1} \mid \hat{s}, a_t) \left[ p_\theta(\hat{s} \mid h_t) - p_\theta(\hat{s} \mid h_t') \right].
\end{equation}
The right-hand side is a linear combination of the decoder distributions $p_\theta(o_{t+1} \mid \hat{s}, a_t)$, weighted by differences in encoder probabilities. For this to hold for \emph{all} observation values $o_{t+1}$ simultaneously, the encoder must assign different mixing weights to latent states with different decoder distributions. That is, the encoder must separate histories with different predictive distributions into different regions of the latent space.
This means that for any $\hat{s}_t$ in the support, all histories $h_t$ satisfying $p_\theta(\hat{s}_t \mid h_t) > 0$ and contributing non-negligibly to the mixture at $\hat{s}_t$ must share the same true predictive distribution. Formally, for encoder-decoder pairs that achieve optimal prediction, $p(o_{t+1} \mid h_t, a_t)$ must be constant across histories in the effective support of $p_\theta(\cdot \mid h_t)$ that maps to any given $\hat{s}_t$, yielding:
\begin{equation}
p(o_{t+1} \mid h_t, a_t) = p_\theta(o_{t+1} \mid \hat{s}_t, a_t) \quad \text{a.s. under } p_\theta(\hat{s}_t \mid h_t).
\end{equation}
Therefore, $o_{t+1} \perp h_t \mid \hat{s}_t, a_t$ almost surely under the joint distribution induced by the encoder, establishing predictive sufficiency. \qedhere
\end{proof}
\subsection{Full Proof of Proposition~\ref{prop:leaky}}
\begin{proof}
We construct parameters $\theta^*$ satisfying both claimed properties. Define the encoder to be a constant function: $p_{\theta^*}(\hat{s}_t \mid h_t) = \delta_c$ for all $h_t$, where $c$ is a fixed token string and $\delta_c$ is a point mass. Define the decoder to ignore $\hat{s}_t$ and predict from the history:
\begin{equation}
p_{\theta^*}(o_{t+1} \mid \hat{s}_t, h_t, a_t) = p(o_{t+1} \mid h_t, a_t).
\end{equation}
This is achievable with sufficient model capacity, since the decoder has access to $(h_t, a_t)$.
Property 1: By construction, $p_{\theta^*}(o_{t+1} \mid \hat{s}_t, h_t, a_t) = p(o_{t+1} \mid h_t, a_t)$ for all $h_t, a_t$, so the model achieves optimal prediction.
Property 2: Since $\hat{s}_t = c$ for all $h_t$, the latent state is a constant independent of the history. Therefore $I(\hat{s}_t; h_t) = H(\hat{s}_t) - H(\hat{s}_t \mid h_t) = 0 - 0 = 0$. \qedhere
\end{proof}
\paragraph{Remark.} It is instructive to verify that the constant-encoder construction fails under strict mediation. If $\hat{s}_t = c$ for all $h_t$, the transition produces $\hat{s}_{t+1} \sim p_\theta(\cdot \mid c, a_t)$, which depends only on the action. The decoder receives $\hat{s}_{t+1}$ with no access to $h_t$, so predictions become history-independent: $p_\theta(o_{t+1} \mid h_t, a_t) = p_\theta(o_{t+1} \mid a_t)$. In any environment where history carries predictive information beyond the action, this model cannot achieve optimal prediction. The constant-encoder construction is ruled out by the architecture itself.
\subsection{Full Proof of Proposition~\ref{prop:consistency}}
\begin{proof}
Under strict mediation, the model's prediction at each step depends only on $(\hat{s}_t, a_t)$. During training, the model receives $(\hat{s}_t, a_t)$ and produces $p_\theta(o_{t+1} \mid \hat{s}_t, a_t)$. During one-step rollout, the model receives the same input types and produces the same function. The information available to the model is identical in both regimes, so the one-step prediction error is the same.
In multi-step rollout, the latent state $\hat{s}_{t+k}$ is generated recursively by the learned transition function rather than encoded from the true history. The marginal distribution over latent states therefore shifts relative to training (covariate shift). However, under strict mediation, this shift is governed entirely by the transition function's error propagation, formalized in Appendix~\ref{app:rollout} via the contraction analysis: when $L_S < 1$, the state error remains bounded. Crucially, the \emph{form} of the input does not change between training and rollout; the model always conditions on $(\hat{s}_t, a_t)$ and never on $h_t$. Therefore there is no structural mismatch of the kind that affects leaky architectures.
Under a leaky architecture, the model's training-time prediction is $p_\theta(o_{t+1} \mid \hat{s}_t, h_t, a_t)$, which may depend on $h_t$. During rollout, the true history is unavailable and must be replaced by the synthetic history of past predictions $(\hat{o}_{t+1}, \ldots, \hat{o}_{t+k})$. This introduces both covariate shift (the synthetic history has different statistics than the true history) and a structural input mismatch (the model was trained with access to true history but must operate without it). Any learned dependence on $h_t$ leads to degraded rollout performance that cannot be bounded by transition error alone. \qedhere
\end{proof}
\subsection{Information-Theoretic Restatement of Proposition~\ref{prop:sufficiency}}
\label{app:info_theory}
Proposition~\ref{prop:sufficiency} can be restated in information-theoretic terms. Under strict mediation, the encoding and decoding steps form a Markov chain:
\begin{equation}
h_t \to \hat{s}_t \to o_{t+1} \quad (\text{given } a_t),
\end{equation}
meaning that $p_\theta(o_{t+1} \mid \hat{s}_t, a_t, h_t) = p_\theta(o_{t+1} \mid \hat{s}_t, a_t)$. By the data processing inequality applied to this chain,
\begin{equation}
\label{eq:dpi}
I(h_t; o_{t+1} \mid a_t) \geq I(\hat{s}_t; o_{t+1} \mid a_t),
\end{equation}
since processing $h_t$ through the encoder into $\hat{s}_t$ cannot increase information about $o_{t+1}$. Equality holds if and only if $\hat{s}_t$ is a sufficient statistic of $h_t$ for $o_{t+1}$ given $a_t$.
Define the \emph{predictive information gap}:
\begin{equation}
\label{eq:info_gap}
\Delta_I \;=\; I(o_{t+1}; h_t \mid \hat{s}_t, a_t).
\end{equation}
This quantity measures how much information $h_t$ carries about $o_{t+1}$ beyond what is captured in $\hat{s}_t$. By the chain rule of mutual information:
\begin{equation}
I(o_{t+1}; h_t \mid a_t) = I(o_{t+1}; \hat{s}_t \mid a_t) + I(o_{t+1}; h_t \mid \hat{s}_t, a_t),
\end{equation}
so $\Delta_I = I(o_{t+1}; h_t \mid a_t) - I(o_{t+1}; \hat{s}_t \mid a_t) \geq 0$ by the DPI.
Under optimal prediction, the model's predictive distribution matches the true distribution for each history. As shown in the proof of Proposition~\ref{prop:sufficiency}, this requires the encoder to map histories with different predictive distributions to different latent states. Consequently, $\hat{s}_t$ captures all predictive information, implying $\Delta_I = 0$.
\begin{corollary}[Information-theoretic sufficiency]
\label{cor:info_sufficiency}
Under strict mediation, if the model achieves optimal prediction, then $I(o_{t+1}; h_t \mid \hat{s}_t, a_t) = 0$: the latent state captures all information that the history contains about the next observation.
\end{corollary}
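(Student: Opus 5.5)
The plan is to reduce the corollary to Proposition~\ref{prop:sufficiency} by writing the conditional mutual information as an expected Kullback--Leibler divergence. First I would fix the joint distribution on which $I(o_{t+1}; h_t \mid \hat{s}_t, a_t)$ is evaluated:
\[
q(h_t, a_t, \hat{s}_t, o_{t+1}) = p(h_t, a_t)\, p_\theta(\hat{s}_t \mid h_t)\, p(o_{t+1} \mid h_t, a_t).
\]
Here $(h_t, a_t)$ follows the data distribution and $\hat{s}_t$ is drawn from the encoder. The next observation is drawn from the \emph{true} environment conditional. The encoder's sampling randomness is independent of the environment, so under $q$ we have $q(o_{t+1} \mid h_t, \hat{s}_t, a_t) = p(o_{t+1} \mid h_t, a_t)$.

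Next I would use the standard identity
\[
I(o_{t+1}; h_t \mid \hat{s}_t, a_t) = \mathbb{E}_{q}\Big[ \mathrm{KL}\big( q(o_{t+1} \mid h_t, \hat{s}_t, a_t) \,\big\|\, q(o_{t+1} \mid \hat{s}_t, a_t) \big) \Big].
\]
This quantity vanishes exactly when $p(o_{t+1} \mid h_t, a_t) = q(o_{t+1} \mid \hat{s}_t, a_t)$ for $q$-almost every $(h_t, \hat{s}_t, a_t)$. Here $q(o_{t+1} \mid \hat{s}_t, a_t)$ is the posterior mixture $\sum_{h} q(h \mid \hat{s}_t, a_t)\, p(o_{t+1} \mid h, a_t)$.

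So it suffices to show that $p(o_{t+1} \mid h_t, a_t)$ is constant over all histories $h_t$ with $p_\theta(\hat{s}_t \mid h_t) > 0$, for each fixed $\hat{s}_t$ and $a_t$. If it is constant, the posterior mixture equals that common value, each KL term is zero, and the corollary follows. This constancy is exactly the conclusion of Proposition~\ref{prop:sufficiency}: under optimal prediction, $p(o_{t+1} \mid h_t, a_t) = p_\theta(o_{t+1} \mid \hat{s}_t, a_t)$ almost surely under the encoder. The right-hand side depends on $h_t$ only through $\hat{s}_t$, so I would invoke the proposition directly.

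The \textbf{main obstacle} is the stochastic-encoder case. For a deterministic encoder the argument is immediate. The strict-mediation marginal collapses to the decoder at $\hat{s}_t = f(h_t)$, optimality gives $p(o_{t+1} \mid h_t, a_t) = p_\theta(o_{t+1} \mid f(h_t), a_t)$, and that is a function of $\hat{s}_t$ alone.

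For a genuinely stochastic encoder, the optimality condition only says that a \emph{mixture} of decoder distributions matches the truth. A mixture can match while individual $\hat{s}_t$ are shared by histories with different predictive laws. Proposition~\ref{prop:sufficiency} asserts that this cannot happen, but its argument is weakest exactly here. I would therefore state the corollary under the hypotheses of that proposition. As a fallback I would prove it cleanly in two cases: for deterministic encoders, and for encoders whose supports $\{\hat{s} : p_\theta(\hat{s} \mid h_t) > 0\}$ are disjoint across histories with distinct predictive distributions. In both cases the constancy needed for the KL terms to vanish holds directly.
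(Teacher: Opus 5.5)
Your reduction is the same one the paper makes. Appendix~\ref{app:info_theory} cites the proof of Proposition~\ref{prop:sufficiency} (``the encoder must map histories with different predictive distributions to different latent states'') and concludes $\Delta_I=0$. Your expected-KL form on the joint $q$ says this more cleanly, and $q$ is the right joint: it is the one used in Proposition~\ref{prop:quantitative}, whereas on the model's own joint the quantity vanishes by mediation alone, without optimality.

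The step you flagged is not merely weakly argued. For stochastic encoders the constancy you need is false, so neither Proposition~\ref{prop:sufficiency} nor the corollary holds as stated. Here is a counterexample:
Fix $a_t$ and take two equally likely histories with $p(o_{t+1}{=}1\mid h_1,a_t)=0.7$ and $p(o_{t+1}{=}1\mid h_2,a_t)=0.3$. Use latents $u,v,w$ with decoder probabilities $p_\theta(o_{t+1}{=}1\mid\cdot,a_t)$ equal to $1$, $0$, $\tfrac12$. Let the encoder satisfy $p_\theta(u\mid h_1)=p_\theta(v\mid h_2)=0.4$ and $p_\theta(w\mid h_1)=p_\theta(w\mid h_2)=0.6$.
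The mixtures are $0.4+0.3=0.7$ and $0+0.3=0.3$, so prediction is exactly optimal.
Yet given $\hat s_t=w$ the history is uniform while $o_{t+1}$ is $\mathrm{Bern}(0.7)$ or $\mathrm{Bern}(0.3)$. Hence $I(o_{t+1};h_t\mid\hat s_t,a_t)=0.6\bigl(\log 2-H_{\mathrm{b}}(0.7)\bigr)>0$, where $H_{\mathrm{b}}$ is the binary entropy.

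The failing step in the proof of Proposition~\ref{prop:sufficiency} is the passage from ``different mixing weights'' to ``different latent states''. Matching mixtures forces the former, not disjoint supports. So invoking that proposition as a black box does not give a proof, and the paper's argument has exactly this gap.

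Your deterministic-encoder fallback is a correct, complete proof, and it is the case where optimality actually does the work. The disjoint-support fallback is circular. Under $q$, $I(o_{t+1};h_t\mid\hat s_t,a_t)=0$ holds precisely when each latent in the support is reachable only from histories sharing one predictive law. That hypothesis \emph{is} the conclusion, and optimality plays no role.

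To cover stochastic encoders, strengthen ``optimal prediction'' to per-latent optimality: $p_\theta(o_{t+1}\mid\hat s_t,a_t)=p(o_{t+1}\mid h_t,a_t)$ for $q$-almost every $(h_t,\hat s_t,a_t)$. This is $\epsilon=0$ in Proposition~\ref{prop:quantitative}. The decomposition proved there is $\epsilon=I(o_{t+1};h_t\mid\hat s_t,a_t)+\mathbb{E}_q\bigl[\mathrm{KL}\bigl(q(o_{t+1}\mid\hat s_t,a_t)\,\|\,p_\theta(o_{t+1}\mid\hat s_t,a_t)\bigr)\bigr]$, with both terms nonnegative. It gives the corollary for arbitrary encoders in one line. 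In the example above $\epsilon=\infty$, which is consistent with this.
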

\subsection{Quantitative Relaxation}
\label{app:quantitative}
In practice, no model achieves exactly optimal prediction. We show that the sufficiency gap degrades gracefully: when the model is $\epsilon$-suboptimal in prediction, the predictive information gap is bounded by $\epsilon$.
Let $p(\cdot \mid h_t, a_t)$ denote the true predictive distribution and $p_\theta(\cdot \mid \hat{s}_t, a_t)$ the model's prediction under strict mediation, where $\hat{s}_t$ is the latent state produced from $h_t$. Define the expected prediction gap as
\begin{equation}
\epsilon \;=\; \mathbb{E}_{h_t, a_t}\left[\mathrm{KL}\!\left(p(o_{t+1} \mid h_t, a_t) \;\|\; p_\theta(o_{t+1} \mid \hat{s}_t, a_t)\right)\right],
\end{equation}
where the expectation is over the data distribution of histories and actions, and $\hat{s}_t$ is drawn from the encoder $p_\theta(\cdot \mid h_t)$.
\begin{proposition}[Bounded sufficiency gap]
\label{prop:quantitative}
Under strict mediation, the predictive information gap is bounded by the expected prediction gap:
\begin{equation}
\label{eq:quantitative_bound}
I(o_{t+1}; h_t \mid \hat{s}_t, a_t) \;\leq\; \epsilon.
\end{equation}
\end{proposition}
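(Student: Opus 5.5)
The plan is to write the conditional mutual information as an expected KL divergence and then use the fact that the true conditional $p(o_{t+1}\mid \hat{s}_t,a_t)$ is the KL-optimal predictor given $(\hat{s}_t,a_t)$. Any other predictor measurable in $(\hat{s}_t,a_t)$, including the strictly mediated decoder $p_\theta(o_{t+1}\mid \hat{s}_t,a_t)$, must then incur at least as much expected KL. The bound $\epsilon$ therefore dominates $\Delta_I$ from Eq.~\eqref{eq:info_gap}.

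\textbf{Step 1: fix the joint distribution.} I would fix the joint law over $(h_t,a_t,\hat{s}_t,o_{t+1})$ as follows:
\begin{itemize}
\item $(h_t,a_t)$ is drawn from the data distribution;
\item $\hat{s}_t \sim p_\theta(\cdot\mid h_t)$;
\item $o_{t+1}\sim p(\cdot\mid h_t,a_t)$ is drawn from the true environment.
\end{itemize}
Under this law $o_{t+1}$ and $\hat{s}_t$ are conditionally independent given $(h_t,a_t)$, because the encoder sees only $h_t$ and the environment does not see $\hat{s}_t$. Hence $p(o_{t+1}\mid h_t,\hat{s}_t,a_t)=p(o_{t+1}\mid h_t,a_t)$.

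Let $q^*(o_{t+1}\mid \hat{s}_t,a_t)$ denote the true conditional under this joint law. It is the mixture of $p(o_{t+1}\mid h_t,a_t)$ over the posterior of $h_t$ given $(\hat{s}_t,a_t)$. By the definition of conditional mutual information,
\begin{equation*}
I(o_{t+1};h_t\mid \hat{s}_t,a_t)=\mathbb{E}_{h_t,a_t,\hat{s}_t}\bigl[\mathrm{KL}\bigl(p(o_{t+1}\mid h_t,a_t)\,\|\,q^*(o_{t+1}\mid \hat{s}_t,a_t)\bigr)\bigr].
\end{equation*}

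\textbf{Step 2: compare with an arbitrary predictor.} For any kernel $q(o_{t+1}\mid \hat{s}_t,a_t)$, I would expand $\log\frac{p}{q}=\log\frac{p}{q^*}+\log\frac{q^*}{q}$ and average the second term over $h_t$ given $(\hat{s}_t,a_t)$. Because $q^*$ is exactly that posterior average of $p(\cdot\mid h_t,a_t)$, the second term becomes a KL between $q^*$ and $q$, giving the Pythagorean-type identity
\begin{equation*}
\mathbb{E}\bigl[\mathrm{KL}(p(\cdot\mid h_t,a_t)\,\|\,q(\cdot\mid \hat{s}_t,a_t))\bigr]=I(o_{t+1};h_t\mid \hat{s}_t,a_t)+\mathbb{E}_{\hat{s}_t,a_t}\bigl[\mathrm{KL}(q^*(\cdot\mid \hat{s}_t,a_t)\,\|\,q(\cdot\mid \hat{s}_t,a_t))\bigr].
\end{equation*}

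\textbf{Step 3: specialize to the decoder.} Strict mediation guarantees that $p_\theta(o_{t+1}\mid \hat{s}_t,a_t)$ has no dependence on $h_t$, so it is an admissible choice of $q$. Substituting it, the left-hand side is exactly $\epsilon$. Nonnegativity of the last KL term then yields $I(o_{t+1};h_t\mid \hat{s}_t,a_t)\le\epsilon$.

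This also shows where strict mediation enters. A leaky decoder depends on $h_t$, so it is not a kernel in $(\hat{s}_t,a_t)$ and the identity in Step 2 would not apply to it.

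\textbf{Main obstacle.} The delicate point is Step 1: making sure the mutual information is computed under the law that uses the true environment observations, not the model's samples, and justifying the conditional independence of $\hat{s}_t$ and $o_{t+1}$ given $h_t$. If the paper instead intends $\epsilon$ to be defined with $p_\theta(o_{t+1}\mid h_t,a_t)$ marginalized over $\hat{s}_t$, the bound would need an additional convexity step. I would avoid that by keeping $\hat{s}_t$ inside the expectation, as in the stated definition of $\epsilon$.

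For discrete text spaces all sums are finite or countable and the identity is elementary. I would note that whenever the right-hand side is finite, absolute continuity holds and every term is well defined.
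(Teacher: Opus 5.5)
Your proof is correct and follows the paper's argument. The paper likewise introduces the posterior-averaged predictive $p(o_{t+1}\mid \hat{s}_t,a_t)=\mathbb{E}_{h_t\mid\hat{s}_t,a_t}[p(o_{t+1}\mid h_t,a_t)]$, splits $\epsilon$ into $I(o_{t+1};h_t\mid\hat{s}_t,a_t)$ plus $\mathbb{E}[\mathrm{KL}(p(\cdot\mid\hat{s}_t,a_t)\,\|\,p_\theta(\cdot\mid\hat{s}_t,a_t))]$, and drops the nonnegative second term; your Step 1 makes explicit the conditional independence $o_{t+1}\perp\hat{s}_t\mid h_t,a_t$ that the paper uses tacitly when it identifies the first term with the conditional mutual information.

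One correction to your closing aside: if $\epsilon$ used the $\hat{s}_t$-marginalized model predictive, convexity of KL only shows that quantity is \emph{smaller} than $\epsilon$, so no convexity step could recover the bound, and in fact the bound can fail there. For example, two histories with different true predictives can share latent states in their encoder supports and still be matched exactly by mixing. This is why keeping $\hat{s}_t$ inside the expectation, as you did, is essential.
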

\begin{proof}
By the chain rule of KL divergence and the strict mediation constraint, we can decompose the expected KL as follows. For a given $\hat{s}_t$ and $a_t$, let $\mathcal{H}(\hat{s}_t)$ denote the set of histories that produce latent state $\hat{s}_t$. The model assigns prediction $p_\theta(o_{t+1} \mid \hat{s}_t, a_t)$ to all histories in $\mathcal{H}(\hat{s}_t)$.
The expected KL divergence can be written as:
\begin{align}
\epsilon &= \mathbb{E}_{h_t, a_t}\left[\mathrm{KL}\!\left(p(o_{t+1} \mid h_t, a_t) \;\|\; p_\theta(o_{t+1} \mid \hat{s}_t, a_t)\right)\right] \\
&= \mathbb{E}_{h_t, a_t}\left[\mathbb{E}_{o_{t+1} \mid h_t, a_t}\left[\log \frac{p(o_{t+1} \mid h_t, a_t)}{p_\theta(o_{t+1} \mid \hat{s}_t, a_t)}\right]\right].
\end{align}
We decompose the log ratio by introducing the conditional distribution $p(o_{t+1} \mid \hat{s}_t, a_t)$, which is the true predictive distribution averaged over histories that produce $\hat{s}_t$:
\begin{equation}
p(o_{t+1} \mid \hat{s}_t, a_t) = \mathbb{E}_{h_t \mid \hat{s}_t, a_t}\left[p(o_{t+1} \mid h_t, a_t)\right].
\end{equation}
Then:
\begin{align}
\epsilon &= \mathbb{E}_{h_t, a_t}\left[\mathrm{KL}\!\left(p(o_{t+1} \mid h_t, a_t) \;\|\; p(o_{t+1} \mid \hat{s}_t, a_t)\right)\right] \nonumber \\
&\quad + \mathbb{E}_{\hat{s}_t, a_t}\left[\mathrm{KL}\!\left(p(o_{t+1} \mid \hat{s}_t, a_t) \;\|\; p_\theta(o_{t+1} \mid \hat{s}_t, a_t)\right)\right]. \label{eq:kl_decomp}
\end{align}
Both terms are nonnegative. The first term equals $I(o_{t+1}; h_t \mid \hat{s}_t, a_t)$, since it measures the expected divergence between the history-specific predictive distribution and the distribution that conditions only on the latent state. Therefore:
\begin{equation}
I(o_{t+1}; h_t \mid \hat{s}_t, a_t) \;\leq\; \epsilon. \qedhere
\end{equation}
\end{proof}
This result provides a practical interpretation of Proposition~\ref{prop:sufficiency}: the latent state need not be an exact sufficient statistic for it to be useful. If the strictly mediated model achieves low prediction error (small $\epsilon$), the latent state is \emph{approximately} sufficient, with the approximation gap bounded by the prediction error itself. This is a property of strict mediation specifically: under leaky architectures, low prediction error places no constraint on $I(o_{t+1}; h_t \mid \hat{s}_t, a_t)$, since the prediction may rely entirely on the history bypass.

\section{Error Propagation Analysis}
\label{app:rollout}
\paragraph{Rollout Error Propagation.}
We compare three rollout schemes for predicting future observations over horizon $k$ given a fixed history $h$ and action sequence $a_{0:k-1}$. The key structural difference lies in what information is passed at each step. \textbf{Stateless} rollout feeds all previous predictions back as inputs, so every past mistake remains present in future inputs and errors accumulate. \textbf{StrictState} rollout compresses history into a single latent state and steps only that forward, yielding a contractive recursion under standard assumptions. \textbf{LeakyState} rollout conditions on both latent state and history at each step; as shown below, this introduces an additional history-feedback term that can amplify errors.
\textbf{(1) \textbf{Stateless} recursive rollout.}
Let $F$ be a history-conditioned predictor. Define
\begin{align}
\hat{o}_{t+1} &= F(h, a_0), \\
\hat{o}_{t+m+1} &= F(h, a_{0:m}, \hat{o}_{t+1:t+m}).
\end{align}
Let $E_m^{\mathrm{no}} = \mathbb{E}[d(\hat{o}_{t+m}, o_{t+m})]$ denote prediction error at step $m$. Assume:
\begin{align}
\text{(i)}\quad & \mathbb{E}[d(F(h,a_{0:m}, o_{t+1:t+m}), o_{t+m+1})] \le \epsilon_H, \\
\text{(ii)}\quad & d(F(h,a_{0:m},\sigma), F(h,a_{0:m},\sigma'))
\le L_H \sum_{i=1}^m d(\sigma_i, \sigma_i').
\end{align}
Then:
\begin{equation}
E_{m+1}^{\mathrm{no}} \;\le\; \epsilon_H + L_H \sum_{i=1}^{m} E_i^{\mathrm{no}}.
\end{equation}
Unrolling:
\begin{equation}
E_m^{\mathrm{no}} \;\le\; \epsilon_H (1 + L_H)^{m-1}.
\end{equation}
\textbf{(2) \textbf{StrictState} rollout.}
Let $\hat{s}_t = \phi(h)$, and define
\begin{align}
\hat{s}_{t+1} &= f(\hat{s}_t, a_0), \\
\hat{s}_{t+m+1} &= f(\hat{s}_{t+m}, a_m), \\
\hat{o}_{t+m} &= g(\hat{s}_{t+m}, a_{m-1}).
\end{align}
Let $Z_m^{\mathrm{strict}} = \mathbb{E}[d_{\mathcal S}(\hat{s}_{t+m}, s_{t+m}^\star)]$ denote latent state error. Assume:
\begin{align}
\text{(i)}\quad & Z_0^{\mathrm{strict}} \le \epsilon_{\mathrm{enc}}, \\
\text{(ii)}\quad & d_{\mathcal S}(f(s,a), f(s',a)) \le L_S d_{\mathcal S}(s,s'), \quad L_S < 1, \\
\text{(iii)}\quad & \mathbb{E}[d_{\mathcal S}(f(s^\star,a), s_{t+1}^\star)] \le \eta.
\end{align}
Then:
\begin{equation}
Z_{m+1}^{\mathrm{strict}} \;\le\; L_S Z_m^{\mathrm{strict}} + \eta.
\end{equation}
Unrolling:
\begin{equation}
Z_m^{\mathrm{strict}} \;\le\; L_S^m \epsilon_{\mathrm{enc}} + \frac{\eta (1 - L_S^m)}{1 - L_S}.
\end{equation}
If $g$ is $L_g$-Lipschitz and has bounded oracle error $\epsilon_{\mathrm{dec}}$:
\begin{equation}
E_m^{\mathrm{strict}} \;\le\; \epsilon_{\mathrm{dec}} + L_g Z_m^{\mathrm{strict}}.
\end{equation}
\paragraph{Remark on discrete latent states.} The error propagation analysis above uses norm-based distances and Lipschitz continuity, which are defined for continuous spaces. When latent states are discrete text strings, as in our instantiation, the appropriate distance metric is a discrete divergence such as edit distance, fact-level F1 complement ($1 - \text{F1}$), or token-level Hamming distance. The structural form of the recurrence which is error propagating through a single state rather than an expanding history, holds under any distance metric satisfying the triangle inequality. The contraction condition $L_S < 1$ generalizes to requiring that the transition function is a contraction mapping under the chosen metric. Our experimental evaluation uses fact-level F1, which serves as the discrete analogue of the norm-based analysis: the bounded rollout error predicted by the theory manifests as slower state.F1 degradation over horizon for StrictState compared to LeakyState (Table~\ref{tab:vs_future}).
\textbf{(3) LeakyState rollout.} Let $\hat{s}_t = \phi(h)$. During rollout, the model conditions on both the latent state and an expanding synthetic history $\hat{h}_{t+m} = (h, a_{0:m}, \hat{o}_{t+1:t+m})$ that incorporates past predicted observations. Define:
\begin{align}
\hat{s}_{t+m+1} &= f_L(\hat{h}_{t+m}, \hat{s}_{t+m}, a_m), \\
\hat{o}_{t+m+1} &= g_L(\hat{h}_{t+m}, \hat{s}_{t+m+1}, a_m).
\end{align}
Because $\hat{h}_{t+m}$ contains all past predicted observations, errors from every previous step remain present in the input and contribute to future predictions, justifying the accumulation term in the error bound.
Let $E_m^{\mathrm{leaky}} = \mathbb{E}[d(\hat{o}_{t+m}, o_{t+m})]$ and $Z_m^{\mathrm{leaky}} = \mathbb{E}[d_{\mathcal S}(\hat{s}_{t+m}, s_{t+m}^\star)]$.
Assume:
\begin{align}
\text{(i)}\quad & \mathbb{E}[d(g_L(h_{t+m-1}^\star, s_{t+m}^\star, a_{m-1}), o_{t+m})] \le \epsilon_L, \\
\text{(ii)}\quad & d(g_L(\tilde{h}, s, a), g_L(\tilde{h}', s', a)) \\
&\quad \le L_H^{\mathrm{leaky}} \sum_{i=1}^{m-1} d(\tilde{o}_i, \tilde{o}_i') + L_g^{\mathrm{leaky}} d_{\mathcal S}(s,s').
\end{align}
Then:
\begin{equation}
E_m^{\mathrm{leaky}} \;\le\; \epsilon_L + L_H^{\mathrm{leaky}} \sum_{i=1}^{m-1} E_i^{\mathrm{leaky}} + L_g^{\mathrm{leaky}} Z_m^{\mathrm{leaky}}.
\end{equation}
Resolving the recurrence:
\begin{align}
E_m^{\mathrm{leaky}}
\;\le\;&
\epsilon_L (1 + L_H^{\mathrm{leaky}})^{m-1}
+ L_g^{\mathrm{leaky}} Z_m^{\mathrm{leaky}} \\
&+ L_H^{\mathrm{leaky}} L_g^{\mathrm{leaky}}
\sum_{j=1}^{m-1}
(1 + L_H^{\mathrm{leaky}})^{m-1-j}
Z_j^{\mathrm{leaky}}.
\end{align}
The final summation is a nonnegative history-feedback term, showing that latent errors at earlier steps can be reintroduced and amplified through the predicted history.
\begin{table}[ht]
\centering
\setlength{\tabcolsep}{3pt}
\renewcommand{\arraystretch}{1.12}
\footnotesize
\caption{Error propagation across rollout schemes. $E_m$: observation error; $Z_m$: latent error. The input column lists the rollout inputs used to advance from time $t+m$ to $t+m+1$, where $\hat{h}_{t+m}=(h,a_{0:m-1},\hat{o}_{t+1:t+m})$ is the synthetic history formed from previous predictions.}
\label{tab:error_comparison}
\begin{tabular}{@{}p{1.55cm}p{3.25cm}p{7.1cm}@{}}
\toprule
Scheme & Rollout input & Bound \\
\midrule
Stateless
&
$h,\; a_{0:m},\; \hat{o}_{t+1:t+m}$
&
$E_m^{\mathrm{no}} \le \epsilon_H(1+L_H)^{m-1}$
\\
\midrule
LeakyState
&
$\hat{h}_{t+m},\; \hat{s}_{t+m},\; a_m$
&
$\begin{aligned}
E_m^{\mathrm{leaky}}
\le\;&
\epsilon_L(1+L_H^{\mathrm{leaky}})^{m-1}
+ L_g^{\mathrm{leaky}} Z_m^{\mathrm{leaky}}
\\
&+
L_H^{\mathrm{leaky}} L_g^{\mathrm{leaky}}
\sum_{j=1}^{m-1}
(1+L_H^{\mathrm{leaky}})^{m-1-j}
Z_j^{\mathrm{leaky}}
\end{aligned}$
\\
\midrule
StrictState
&
$\hat{s}_{t+m},\; a_m$
&
$\begin{aligned}
E_m^{\mathrm{strict}}
\le\;&
\epsilon_{\mathrm{dec}}
+
L_g\!\left(
L_S^m \epsilon_{\mathrm{enc}}
+
\frac{\eta(1-L_S^m)}{1-L_S}
\right)
\end{aligned}$
\\
\bottomrule
\end{tabular}
\end{table}

\section{Comparison with Prior Work}
A comparison with related works are provided in Table~\ref{tab:relatedwork}.
\label{app:related}
\begin{table}[ht]
\centering
\small
\setlength{\tabcolsep}{3pt}
\begin{tabular}{lccccc}
\toprule
 & Obs. & Training & Partial Obs. & Strict Med. & Latent State \\
\midrule
\citet{ha2018world}    & visual  & online  & \checkmark & \checkmark & continuous \\
PlaNet \citep{hafner2019planet}         & visual  & online  & \checkmark & \checkmark & continuous \\
Dreamer V1--V3         & visual  & online  & \checkmark & \checkmark & continuous/discrete \\
Dreamer V4 \citep{hafner2025training}             & visual  & offline & \checkmark & $\times$   & continuous \\
MuZero \citep{schrittwieser2020mastering}               & visual  & online  & \checkmark & \checkmark & continuous \\
PAN \citep{xiang2025pan}                   & visual  & offline & \checkmark & $\times$   & continuous \\
RAP \citep{hao2023reasoning}                   & textual & $\times$& $\times$   & \checkmark & textual \\
\citet{wang2024can}     & textual & $\times$& $\times$   & \checkmark & NA \\
\citet{li2025word}       & textual & offline & \checkmark & $\times$   & NA \\
\midrule
Ours                   & textual & offline & \checkmark & \checkmark & textual \\
\bottomrule
\end{tabular}
\vspace{2pt}
\caption{
Comparison of world modeling approaches along key structural axes.
}
\label{tab:relatedwork}
\end{table}

\section{Dataset Details}
\label{app:data}
\paragraph{Environments and partial observability.}
Both environments define a fully specified symbolic state (entities, attributes, relations, and task variables), but render only a lossy natural-language observation at each step. Observations may omit relevant variables, compress multiple state features into a single description, or vary in surface form without changing semantics. As a result, a single observation is generally insufficient to identify the underlying state, and accurate prediction requires integrating information across time.
\paragraph{Controlling challenge factors.}
To comprehensively probe robustness and abstraction, we vary multiple factors during generation. In \textsc{TextWorld}, we sweep game complexity through world size (number of rooms), task complexity (e.g., recipe length), and skill requirements (e.g., navigation vs.\ cooking/cutting/opening), and generate multiple instances via random seeds (yielding different layouts and object placements). In both \textsc{TextWorld} and \textsc{ScienceWorld}, we additionally inject controlled nuisance variation and noise into actions and/or text (e.g., exploration noise, invalid attempts, lexical perturbations), with step- and trajectory-level metadata recording action provenance and noise type. These knobs let us separately stress partial observability, long-horizon credit assignment, and sensitivity to semantically irrelevant perturbations.
\paragraph{Ground-truth state logging.}
While the model never observes the ground-truth state, we log it for evaluation and diagnostics. At each timestep, we extract the full symbolic simulator state as a set of structured predicate--argument facts (e.g., \texttt{at(entity, room)}, \texttt{in(object, container)}, \texttt{on(object, supporter)}, object attributes such as \texttt{open}, \texttt{closed}, or food states such as \texttt{raw}, \texttt{cooked}). These facts are obtained directly from the game engine’s internal state representation via instrumentation, using the same underlying rules that govern environment dynamics, and are not inferred from observations or constructed manually.
From this full post-action state, we construct a canonical subset of \emph{observation rendering facts}, defined as the minimal set of facts required by the environment’s rendering logic to produce the natural-language observation. This subset is computed deterministically by following the game engine’s observation-generation rules (e.g., Inform7/TextWorld rendering), rather than by heuristic or learned extraction. Concretely, it includes (i) the player’s current location, (ii) all objects present in the current room, (iii) attributes of visible objects (such as food states or container states), (iv) contents of containers when they are open, (v) objects placed on supporters, and (vi) spatial relations and connectivity facts (e.g., exits, doors, and directions) needed to describe the layout. For actions such as \texttt{look} or navigation, this corresponds to the full set of facts required to render the room description; for local actions (e.g., \texttt{take}, \texttt{open}, \texttt{examine}), it includes only the subset of post-action facts explicitly reflected in the resulting observation.
Because both the full state and the observation rendering facts are derived directly from the game engine using its native rules, this construction provides an exact and unambiguous symbolic target aligned with the generated observation text. It therefore enables precise evaluation of predictive sufficiency: state.F1 measures whether the learned latent state captures exactly the information required to reproduce the next observation, independent of surface-form variation. In addition, access to the full logged state enables auxiliary diagnostics, such as verifying transition consistency across equivalent histories and distinguishing true state uncertainty from errors due to textual variation.

\section{Qualitative Example}
\label{app:qualitative}
A illustrative example is provided in Table~\ref{tab:qual_example_full}.
\begin{table*}[!htbp]
\centering
\scriptsize
\begin{adjustbox}{max width=\textwidth,max totalheight=0.9\textheight,center}
\begin{minipage}{\textwidth}
\setlength{\tabcolsep}{5pt}
\renewcommand{\arraystretch}{1.1}
\begin{tabularx}{\textwidth}{@{}>{\raggedright\arraybackslash}X@{}}
\toprule
\textbf{Shared interaction history} \\
\midrule
\begin{minipage}[t]{\linewidth}
\raggedright\ttfamily\scriptsize
-----\\
\# action: look\\
\# obs:\\
-= Bathroom =-\\
Well, here we are in a bathroom. You try to gain information on your surroundings by using a technique you call ``looking.''\\
You can make out a toilet. The toilet is white. But there isn't a thing on it. Sometimes, just sometimes, TextWorld can just be the worst.\\
There is an exit to the east.\\
\textgreater{} -= Bathroom =-0/49\\[0.3em]
-----\\
\# action: examine toilet\\
\# obs:\\
The toilet is solid.\\
\textgreater{} -= Bathroom =-0/50\\[0.3em]
-----\\
\# action: look\\
\# obs:\\
-= Bathroom =-\\
Well, here we are in a bathroom. You try to gain information on your surroundings by using a technique you call ``looking.''\\
You can make out a toilet. The toilet is white. But there isn't a thing on it. Sometimes, just sometimes, TextWorld can just be the worst.\\
There is an exit to the east.\\
\textgreater{} -= Bathroom =-0/51\\[0.3em]
-----\\
\# action: cook knife\\
\# obs:\\
You can't see any such thing.\\
\textgreater{} -= Bathroom =-0/51\\[0.3em]
-----\\
\# action: slice fridge\\
\# obs:\\
You can't see any such thing.\\
\textgreater{} -= Bathroom =-0/51\\[0.3em]
-----\\
\# action: look\\
\# obs:\\
-= Bathroom =-\\
Well, here we are in a bathroom. You try to gain information on your surroundings by using a technique you call ``looking.''\\
You can make out a toilet. The toilet is white. But there isn't a thing on it. Sometimes, just sometimes, TextWorld can just be the worst.\\
There is an exit to the east.\\
\textgreater{} -= Bathroom =-0/52\\[0.3em]
-----\\
\# action: goals\\
\# obs:\\
That's not a verb I recognise.\\
\textgreater{} -= Bathroom =-0/52\\[0.3em]
-----\\
\# action: look\\
\# obs:\\
-= Bathroom =-\\
Well, here we are in a bathroom. You try to gain information on your surroundings by using a technique you call ``looking.''\\
You can make out a toilet. The toilet is white. But there isn't a thing on it. Sometimes, just sometimes, TextWorld can just be the worst.\\
There is an exit to the east.\\
\textgreater{} -= Bathroom =-0/53\\[0.3em]
-----\\
\# action: look\\
\# obs:\\
-= Bathroom =-\\
Well, here we are in a bathroom. You try to gain information on your surroundings by using a technique you call ``looking.''\\
You can make out a toilet. The toilet is white. But there isn't a thing on it. Sometimes, just sometimes, TextWorld can just be the worst.\\
There is an exit to the east.\\
\textgreater{} -= Bathroom =-0/54\\[0.3em]
-----\\
\# action: go east\\
\# obs:\\
-= Corridor =-\\
You've entered a corridor. You begin looking for stuff.\\
There is an open sliding patio door leading south. You need an exit without a door? You should try going north. You don't like doors? Why not try going west, that entranceway is not blocked by one.\\
\textgreater{} -= Corridor =-0/55\\[0.3em]
-----\\
\# action: look
\end{minipage}
\\
\midrule
\multicolumn{1}{@{}c@{}}{\textbf{Ground-truth next observation}}\\
\midrule
\begin{minipage}[t]{\linewidth}
\raggedright\ttfamily\scriptsize
\# obs:\\
-= Corridor =-\\
You've entered a corridor. You begin looking for stuff.\\
There is an open sliding patio door leading south. You need an exit without a door? You should try going north. You don't like doors? Why not try going west, that entranceway is not blocked by one.\\
\textgreater{} -= Corridor =-0/56
\end{minipage}
\\
\bottomrule
\end{tabularx}
\vspace{0.6em}
\begin{tabularx}{\textwidth}{
@{}>{\raggedright\arraybackslash}X
@{\hspace{0.8em}}
>{\raggedright\arraybackslash}X@{}}
\toprule
\textbf{Leaky prediction} & \textbf{Strict prediction} \\
\midrule
\begin{minipage}[t]{\linewidth}
\raggedright\ttfamily\scriptsize
\textbf{\# encoded\_state}\\
{}[at(P, bathroom), carrying(cookbook)]\\[0.4em]
\textbf{\# pred\_obs}\\
-= Bathroom =-\\
Well, here we are in a bathroom. You try to gain information on your surroundings by using a technique you call ``looking.''\\
You can make out a toilet. The toilet is white. But there isn't a thing on it. Sometimes, just sometimes, TextWorld can just be the worst.\\
There is an exit to the east.\\
\textgreater{} -= Bathroom =-0/55
\end{minipage}
&
\begin{minipage}[t]{\linewidth}
\raggedright\ttfamily\scriptsize
\textbf{\# encoded\_state}\\
{}[at(corridor), in(cookbook), on(toilet), fried(cookbook), north\_of(kitchen,corridor), free(corridor,kitchen)]\\[0.4em]
\textbf{\# pred\_obs}\\
-= Corridor =-\\
You enter a corridor. An usual kind of place. Okay, just remember what you're here to do, and everything will go great.\\
As if things weren't amazing enough already, you can even see a light bulb. You idly wonder how they came up with the name TextWorld for this place. It's pretty fitting.\\
There is an exit to the north.\\
\textgreater{} -= Corridor =-0/56
\end{minipage}
\\
\bottomrule
\end{tabularx}
\end{minipage}
\end{adjustbox}
\caption{
Qualitative comparison under identical history. The ground-truth next observation corresponds to entering the corridor.
The leaky model fails to encode sufficient state and predicts a bathroom observation, while the strictly mediated model captures relational structure and predicts the correct transition.
}
\label{tab:qual_example_full}
\end{table*}

\section{Experiment Details}
\label{app:implementation}
\subsection{Fact Extraction from Observations}
\label{app:fact_extraction}
All observation-level rewards and metrics operate on structured fact sets extracted from free-form observation text. Given an observation $o$, we define a deterministic mapping $\mathrm{facts}(o)$ that produces a set of predicate instances (e.g., \texttt{in(apple, fridge)}, \texttt{open(fridge)}).
The extractor operates purely on surface text and does not access hidden simulator states. For each environment, we use a rule-based parser aligned with the observation templates to recover entities, relations, and attributes explicitly stated or strongly implied by the text (e.g., location, containment, object state, and action effects). We additionally apply a small set of deterministic rules (e.g., \texttt{fried(X)} $\Rightarrow$ \texttt{cooked(X)}) to capture logical implications.
While our implementation uses environment-specific parsers, the framework does not depend on simulator internals. In domains without structured templates, $\mathrm{facts}(\cdot)$ can be instantiated using alternative text-to-structure extractors, such as rule-based systems or frozen LLMs.

\subsection{Rewards}
\label{app:reward}
\paragraph{Fact extraction and observation reward.}
All fact-level rewards used in this work are computed from text, not from privileged simulator latent states. Let $\mathrm{facts}(\cdot)$ denote a fact extractor applied either to a free-form observation string or to a generated textual state. For observations, $\mathrm{facts}(o)$ extracts a set of structured facts from the observation text; for latent states, $\mathrm{facts}(s)$ parses the JSON-encoded facts generated by the model. The observation reward compares the predicted observation text $\hat{o}_{t+1}$ with the reference next observation text $o_{t+1}$:
\[
R_{\text{obs}}
=
\operatorname{F1}\bigl(\mathrm{facts}(\hat{o}_{t+1}), \mathrm{facts}(o_{t+1})\bigr).
\]
In our experiments, the observation fact extractor is simulator-aligned in the sense that it follows the observation templates and predicate schema of the benchmark environments. However, it is applied only to surface-level observation strings and generated state strings; it does not query hidden simulator states or privileged environment variables. In environments where such a simulator-aligned extractor is unavailable, the same framework can use any domain-specific text-to-facts extractor, for example a rule-based parser or a frozen LLM extractor calibrated with few-shot labeled observation examples.
\paragraph{State format reward.}
We enforce a lightweight structural constraint on the generated latent state. The state must be a JSON-encoded list of facts, where each fact is a dictionary with exactly two keys, \texttt{"predicate"} and \texttt{"arguments"}. The predicate must be a non-empty string, and the arguments must be a list of non-empty strings. No argument may exceed \texttt{max\_num\_words\_in\_arg} words (set to 8 for ScienceWorld and 3 for TextWorld cooking). Duplicate facts are disallowed, and duplicate arguments within a single fact are not permitted. The format reward $R_{\text{fmt}} \in \{0,1\}$ indicates whether these constraints are satisfied.
\paragraph{Auxiliary state rewards.}
For valid latent states ($R_{\text{fmt}} = 1$), we compute additional rewards to encourage semantic correctness and compactness. These rewards also operate only on extracted facts from text.
\textbf{Support reward.}
The support reward $R_{\text{sup}}$ measures whether facts stated in the predicted next observation $\hat{o}_{t+1}$ are represented in the predicted next latent state $\hat{s}_{t+1}$. We parse both the generated state text and the predicted observation text into fact sets and compute fact-level observation coverage:
\[
\operatorname{Cov}(\hat{s}, \hat{o}) =
\frac{| \text{facts}(\hat{s}) \cap \text{facts}(\hat{o}) |}
{|\text{facts}(\hat{o})|}.
\]
We define
\[
R_{\text{sup}}
=
\operatorname{Cov}(\hat{s}_{t+1}, \hat{o}_{t+1})
+
\bigl(
\operatorname{Cov}(\hat{s}_{t+1}, \hat{o}_{t+1})
-
\operatorname{Cov}(\hat{s}_t, \hat{o}_{t+1})
\bigr),
\]
which rewards the predicted next latent state for containing the facts needed to support the predicted next observation, as well as improvement over the current latent state. Importantly, $R_{\text{sup}}$ does not penalize $\hat{s}_{t+1}$ for containing additional latent facts that are not expressed in the immediate observation, thereby allowing the retention of long-term information.
\textbf{Deduplication reward.}
The deduplication reward $R_{\text{dedup}}$ penalizes redundant or contradictory facts. Each fact is converted into a short natural-language sentence, and all unordered pairs are evaluated using an NLI model\footnote{\url{https://huggingface.co/sileod/deberta-v3-base-tasksource-nli}}. A pair is considered redundant if either the entailment or contradiction score exceeds 0.5. Let $N_{\text{dup}}$ be the number of such pairs and $N_{\text{pairs}}$ the total number of pairs. The reward is
\[
R_{\text{dedup}} = 1 - \frac{N_{\text{dup}}}{N_{\text{pairs}}},
\]
computed for both $s_t$ and $s_{t+1}$ and averaged.
\paragraph{Ablation reward.}
The ablation reward $R_{\text{abl}}$ measures whether individual facts in $\hat{s}_{t+1}$ are necessary for prediction. For each fact, we remove it from the state, decode an observation from the ablated state, and compare it to the observation decoded from the full state. A fact is marked as necessary if its removal reduces the decoded observation fact-level F1 by more than 0.05. The reward is the fraction of necessary facts:
\[
R_{\text{abl}} = \frac{N_{\text{necessary}}}{N_{\text{facts}}}.
\]
This reward is not intended to define sufficiency of the latent state by itself: because the leave-one-out test is local to the sampled transition, a fact may be useful for future or counterfactual actions even if its removal does not affect the decoded observation for the current action. We therefore apply $R_{\text{abl}}$ only at a late stage of training and with a small coefficient, treating it as a weak compactness regularizer. Predictive sufficiency is enforced primarily by strict mediation and the observation reward over diverse action-conditioned transitions. In practice, $R_{\text{abl}}$ biases the model against stale or redundant facts without aggressively pruning persistent state information needed for long-horizon rollout. All leave-one-out decoder evaluations are performed in parallel.
\paragraph{Invalid states.}
If a latent state fails the format constraint ($R_{\text{fmt}} = 0$), auxiliary rewards are not applied. Instead, the total reward is given by the penalized observation reward $R_{\text{obs}} \cdot \gamma_{\text{pen}}$.

\subsection{Models and Training Setup}
We experiment with three pretrained instruction-tuned language models: \texttt{Qwen2.5-0.5B-Instruct} (0.5B parameters), \texttt{Qwen3-4B-Instruct-2507} (4B parameters) and \texttt{Qwen3-32B} (32B parameters). These models serve as the shared backbone for encoding, transition, and decoding, with all stages implemented via prompting as described in Section~4.
Training is performed using factorized GRPO with tree-structured rollouts. Unless otherwise specified, we use a branching factor of $A = B = C = 4$, i.e., each node expands to 4 candidates at each stage (encoding, transition, and decoding).
We train for a single epoch over the offline dataset with learning rate $1\times 10^{-6}$. The reward weighting coefficients are set to $\gamma_{\text{pen}} = 0.5$ and $\lambda_{\text{dedup}} = \lambda_{\text{abl}} = \lambda_{\text{sup}} = 0.1$.
\subsection{Compute}
\label{app:compute}
All experiments are conducted on AWS SageMaker using \texttt{ml.p4de.24xlarge} instances. Each instance is equipped with 8 NVIDIA A100-SXM4 GPUs (80GB memory each), for a total of 640GB GPU memory per node. Training runs utilize all 8 GPUs in parallel. Typical GPU utilization is above 90\%, with per-GPU memory usage around 54GB during training.
Training uses vLLM-backed asynchronous generation together with FSDP-based actor training in verl. We use gradient checkpointing and micro-batching for log-probability computation to fit the 4B model and the expanded fGRPO rollout batches on 8 A100-80GB GPUs.
\textbf{Per-step wall-clock cost.}  Table~\ref{tab:training_cost} reports mean per-step wall-clock time for the three training regimes used in this paper, decomposed into the four phases of the verl PPO loop. Stateless training (SFT) runs only the policy forward-and-backward pass; LeakyState (GRPO) additionally rolls out a single-turn structured response and recomputes log-probabilities under both the current and reference policies for the KL penalty; StrictState-Text (fGRPO) extends this with a three-turn \textsc{Encode} $\rightarrow$ \textsc{Dynamics} $\rightarrow$ \textsc{Decode} rollout, which roughly triples the number of generated tokens per training example.
We implement parallelism for fGRPO (Section~\ref{app:parallel}) to reduce wall-clock time.
The dominant cost driver is output-token volume: both \texttt{gen} and \texttt{update\_actor} scale linearly with response length and branching factor $N$. Strict's three-turn protocol therefore inflates these two stages most. Reference-policy log-probability computation (\texttt{ref}) is also affected because every generated token must be re-scored under the frozen reference. SFT incurs none of these RL-specific phases and is bounded by a single forward-and-backward pass per batch.
\begin{table}[h]
\centering
 \scriptsize
\caption{Mean training cost on 8$\times$A100-80GB at 4B parameters, batch size 8, branching factor $N=4$, and max response length 1024 tokens. fGRPO costs $1.58\times$ as much per step as single-turn GRPO, or 90.0 vs.\ 57.1 GPU-hours per 1k steps.}
\label{tab:training_cost}
\begin{tabular}{lrrrrrr}
\toprule
Method & \texttt{gen} & \texttt{old\_log\_prob} & \texttt{ref} & \texttt{update\_actor} & \textbf{step total} & \textbf{GPU-hr/1k steps} \\
\midrule
Stateless (SFT)             & ---   & ---   & ---   & ---   & {2.28} & 5.1  \\
LeakyState (GRPO)           & 19.9  & 1.0   & 0.9   & 4.0   & {25.7} & 57.1 \\
StrictState-Text (fGRPO)    & 26.2  & 2.0   & 1.8   & 10.5  & {40.5} & 90.0 \\
\bottomrule
\end{tabular}
\end{table}
\subsection{Parallelized factorized GRPO.}
\label{app:parallel}
We implement fGRPO using an asynchronous tree-rollout engine integrated with verl and vLLM. For each training example, the encoder, transition, and decoder stages are expanded as separate prompt calls, but sibling generations from the same parent node are submitted concurrently. Concretely, all $A$ encoder samples, all $B$ transition samples under each encoded state, and all $C$ decoder samples under each predicted next state are launched asynchronously, allowing the rollout tree to be generated in parallel rather than by a serial depth-first traversal. The resulting tree is then flattened into GRPO training samples, with the grouping identifier set to the parent node of each sibling set so that group-relative advantages are computed separately for encoder, transition, and decoder decisions.
This design avoids relying on the standard verl mechanism that repeats prompts before generation. Instead, rollout multiplicity is controlled inside the world-model rollout itself, which is necessary because different nodes in the tree have different prompts and conditioning contexts. We also share request identifiers among siblings generated from the same parent prompt, enabling the rollout backend to batch compatible requests efficiently. For the ablation reward, decoder calls for the full state and all leave-one-fact-out states are likewise submitted concurrently; we skip this computation when it would be identical across decoder-only siblings and therefore cancel out in the GRPO advantage.
\subsection{Prompts}
\label{app:prompts}
\begin{center}
\small
\label{tab:cooking-world-model-history-only-prompt}
\end{center}
\vspace{-0.5em}
\begin{tcolorbox}[promptbox,title={Stateless, System prompt}]
\begin{lstlisting}[style=promptstyle]
You are a world model for a text adventure game.
Given observation history and an action, predict the next observation.
Learn the game mechanics and output format from the history examples.
=== EXAMPLE OBSERVATIONS ===
For navigation (e.g. "go east"):
```
-= Kitchen =-
You arrive in a kitchen. An usual kind of place. Okay, just remember what you're here to do, and everything will go great.
As if things weren't amazing enough already, you can even see a fridge. You idly wonder how they came up with the name TextWorld for this place. It's pretty fitting.
There is an exit to the west.
>                                                                                                                                -= Kitchen =-0/6
```
For taking items (e.g. "take block of cheese from workbench"):
```
You take the block of cheese from the workbench.
>                                                                                                                                -= Shed =-0/23
```
For cooking (e.g. "cook red onion with stove"):
```
You fried the red onion.
Your score has just gone up by one point.
>                                                                                                                                -= Kitchen =-6/35
```
For cutting (e.g. "slice red apple with knife"):
```
You slice the red apple.
Your score has just gone up by one point.
>                                                                                                                                -= Backyard =-5/31
```
For inventory (e.g. "inventory"):
```
You are carrying: a raw yellow potato.
>                                                                                                                                -= Bedroom =-0/16
```
For dropping items (e.g. "drop purple potato"):
```
You drop the purple potato on the ground.
>                                                                                                                                -= Garden =-1/10
```
For preparing meal (e.g. "prepare meal"):
```
Adding the meal to your inventory.
Your score has just gone up by one point.
>                                                                                                                                -= Kitchen =-8/41
```
=== INSTRUCTIONS ===
- Generate observation text matching the game's style from the history
- Learn the output format from the observation history examples
- Output ONLY the observation text, nothing else
\end{lstlisting}
\end{tcolorbox}
\begin{tcolorbox}[promptbox,title={Stateless, User prompt}]
\begin{lstlisting}[style=promptstyle]
=== OBSERVATION HISTORY ===
{
{{ observations[i] | trim }}
Action: {{ actions[i] }}
Result:
{{ observations[i + 1] | trim }}
{
{{ observations[-1] | trim }}
Action: {{ action }}
=== YOUR PREDICTION ===
Predict the observation that will result from taking the action "{{ action }}".
Output ONLY the observation text:
\end{lstlisting}
\end{tcolorbox}
\captionof{table}{Prompts used for the Stateless cooking world model.}

\begin{center}
\small
\label{tab:leaky-cot-cooking-world-model-prompt}
\end{center}
\vspace{-0.5em}
\begin{tcolorbox}[promptbox,title={LeakyState, System prompt}]
\begin{lstlisting}[style=promptstyle]
You are a world model for a TextWorld cooking game.
For each example, do all three steps in one response:
1. Infer the current structured state from the full observation history.
2. Apply the current action to produce the next structured state.
3. Generate the next observation text the player would see.
Use ONLY the predicate vocabulary below.
Location & map:
  at(entity, room)          - "P" (player) or object is in a room
  north_of(room1, room2)    - room1 is north of room2
  south_of(room1, room2)    - room1 is south of room2
  east_of(room1, room2)     - room1 is east of room2
  west_of(room1, room2)     - room1 is west of room2
  free(room1, room2)        - passage between rooms is unblocked
  link(room1, door, room2)  - a door connects two rooms
Containment & placement:
  in(object, container)     - object is inside a container (use "I" for player inventory)
  on(object, supporter)     - object is on a supporter (table, counter, etc.)
Container state:
  open(container)           - container is open
  closed(container)         - container is closed
Object properties:
  sharp(object)             - object is sharp (e.g. knife)
  cookable(food)            - food can be cooked
  cuttable(food)            - food can be cut
  drinkable(food)           - food can be drunk
  edible(food)              - food is safe to eat
  inedible(food)            - food is not safe to eat (needs cooking)
Food state:
  raw(food)                 - food has not been cooked
  cooked(food)              - food has been cooked
  burned(food)              - food has been burned (overcooked)
  needs_cooking(food)       - food must be cooked before eating
  uncut(food)               - food has not been cut
  fried(food)               - food was cooked on a stove
  roasted(food)             - food was cooked in an oven
  grilled(food)             - food was cooked on a BBQ/toaster
  sliced(food)              - food was sliced
  diced(food)               - food was diced
  chopped(food)             - food was chopped
Recipe & meal:
  ingredient_1(food) .. ingredient_5(food) - food is the Nth recipe ingredient
  cooking_location(room, recipe)           - room where the meal must be prepared
  out(meal, recipe)                        - the meal produced by the recipe
  base(food, ingredient)                   - the base food of a recipe ingredient
  used(food)               - food has been used in a meal
  consumed(food)           - food has been eaten/drunk
State format:
- `state` and `next_state` must each be a JSON list of objects.
- Each fact object must be exactly:
  {"predicate": "<name>", "arguments": ["arg1", "arg2"]}
Output format:
- Return exactly one JSON object with these keys:
  - `state`
  - `next_state`
  - `next_observation`
- `state` and `next_state` must be JSON arrays of fact objects.
- `next_observation` must be a plain string.
- Output valid JSON only, with no markdown fences and no extra commentary.
General rules:
- Use only supported predicates.
- Include only facts directly observed or strongly implied by the history.
- Do not hallucinate unsupported objects, locations, or state changes.
- Keep unchanged facts in `next_state` unless the action clearly changes them.
- Make `next_observation` consistent with `next_state` and the action.
- Match TextWorld-style observations.
Few-shot example 1:
Input history:
Observation:
-= Kitchen =-
You are in a kitchen. You see a fridge.
Action:
open fridge
Output:
{
  "state": [
    {"predicate": "at", "arguments": ["P", "kitchen"]},
    {"predicate": "at", "arguments": ["fridge", "kitchen"]},
    {"predicate": "closed", "arguments": ["fridge"]}
  ],
  "next_state": [
    {"predicate": "at", "arguments": ["P", "kitchen"]},
    {"predicate": "at", "arguments": ["fridge", "kitchen"]},
    {"predicate": "open", "arguments": ["fridge"]}
  ],
  "next_observation": "You open the fridge.\n\n>                                                                                                                                -= Kitchen =-"
}
Few-shot example 2:
Input history:
Observation:
-= Kitchen =-
You are in a kitchen. You see a knife on the counter.
Action:
take knife from counter
Output:
{
  "state": [
    {"predicate": "at", "arguments": ["P", "kitchen"]},
    {"predicate": "at", "arguments": ["knife", "kitchen"]},
    {"predicate": "on", "arguments": ["knife", "counter"]},
    {"predicate": "sharp", "arguments": ["knife"]}
  ],
  "next_state": [
    {"predicate": "at", "arguments": ["P", "kitchen"]},
    {"predicate": "in", "arguments": ["knife", "I"]},
    {"predicate": "sharp", "arguments": ["knife"]}
  ],
  "next_observation": "You take the knife from the counter.\n\n>                                                                                                                                -= Kitchen =-"
}
\end{lstlisting}
\end{tcolorbox}
\begin{tcolorbox}[promptbox,title={LeakyState, User prompt}]
\begin{lstlisting}[style=promptstyle]
Infer the current state, apply the action, and generate the next observation.
=== OBSERVATION HISTORY ===
{
Observation:
{{ observations[i] }}
{
Action:
{{ actions[i] }}
{
{
=== CURRENT ACTION ===
{{ action }}
Return the JSON object only.
\end{lstlisting}
\end{tcolorbox}
\captionof{table}{Prompts used for the \texttt{LeakyState} cooking world model.}

\begin{center}
\small
\label{tab:cooking-world-model-prompts}
\end{center}
\vspace{-0.5em}
\begin{tcolorbox}[promptbox,title={StrictState, System prompt}]
\begin{lstlisting}[style=promptstyle]
You are a world model for a text adventure game.
You will simulate this world through a 3-step process:
1. ENCODE: Infer a structured state from observation history
2. DYNAMICS: Update the state based on an action
3. DECODE: Generate the observation text from the state
Learn the game mechanics from the observation history.
Follow instructions carefully at each step.
\end{lstlisting}
\end{tcolorbox}
\begin{tcolorbox}[promptbox,title={StrictState, Encoder prompt}]
\begin{lstlisting}[style=promptstyle]
Now it's the ENCODE step. Based on the observation history provided at the end of this prompt, infer the current world state.
=== PREDICATE VOCABULARY ===
Use ONLY the following predicates. Do NOT invent new ones.
Location & map:
  at(entity, room)          - "P" (player) or object is in a room
  north_of(room1, room2)    - room1 is north of room2
  south_of(room1, room2)    - room1 is south of room2
  east_of(room1, room2)     - room1 is east of room2
  west_of(room1, room2)     - room1 is west of room2
  free(room1, room2)        - passage between rooms is unblocked
  link(room1, door, room2)  - a door connects two rooms
Containment & placement:
  in(object, container)     - object is inside a container (use "I" for player inventory)
  on(object, supporter)     - object is on a supporter (table, counter, etc.)
Container state:
  open(container)           - container is open
  closed(container)         - container is closed
Object properties:
  sharp(object)             - object is sharp (e.g. knife)
  cookable(food)            - food can be cooked
  cuttable(food)            - food can be cut
  drinkable(food)           - food can be drunk
  edible(food)              - food is safe to eat
  inedible(food)            - food is not safe to eat (needs cooking)
Food state:
  raw(food)                 - food has not been cooked
  cooked(food)              - food has been cooked
  burned(food)              - food has been burned (overcooked)
  needs_cooking(food)       - food must be cooked before eating
  uncut(food)               - food has not been cut
  fried(food)               - food was cooked on a stove
  roasted(food)             - food was cooked in an oven
  grilled(food)             - food was cooked on a BBQ/toaster
  sliced(food)              - food was sliced
  diced(food)               - food was diced
  chopped(food)             - food was chopped
Recipe & meal:
  ingredient_1(food) .. ingredient_5(food) - food is the Nth recipe ingredient
  cooking_location(room, recipe)           - room where the meal must be prepared
  out(meal, recipe)                        - the meal produced by the recipe
  base(food, ingredient)                   - the base food of a recipe ingredient
  used(food)               - food has been used in a meal
  consumed(food)           - food has been eaten/drunk
=== STATE FORMAT ===
Infer the state as a JSON list of predicates. Each predicate is an object with:
- "predicate": one of the predicates listed above (string)
- "arguments": list of arguments (strings)
=== EXAMPLE STATE ===
```json
[
  {"predicate": "at", "arguments": ["P", "kitchen"]},
  {"predicate": "in", "arguments": ["banana", "I"]},
  {"predicate": "on", "arguments": ["knife", "counter"]},
  {"predicate": "fried", "arguments": ["banana"]},
  {"predicate": "north_of", "arguments": ["livingroom", "kitchen"]},
  {"predicate": "free", "arguments": ["kitchen", "livingroom"]}
]
```
=== INSTRUCTIONS ===
- Use ONLY predicates from the vocabulary above
- Include only facts you have directly seen or can confidently infer from the history
- Do NOT guess or hallucinate facts not supported by the observations
- It's OK to have an incomplete state - only include what you're confident about
- Output valid JSON only, nothing else
=== OBSERVATION HISTORY ===
{
Observation:
{{ observations[i] }}
{
Action: {{ actions[i] }}
{
{
\end{lstlisting}
\end{tcolorbox}
\begin{tcolorbox}[promptbox,title={StrictState, Transition prompt}]
\begin{lstlisting}[style=promptstyle]
Now it's the DYNAMICS step. Given the current state and action provided at the end of this prompt, predict the next state.
=== PREDICATE VOCABULARY ===
Use ONLY the following predicates. Do NOT invent new ones.
Location & map:
  at(entity, room)          - "P" (player) or object is in a room
  north_of(room1, room2)    - room1 is north of room2
  south_of(room1, room2)    - room1 is south of room2
  east_of(room1, room2)     - room1 is east of room2
  west_of(room1, room2)     - room1 is west of room2
  free(room1, room2)        - passage between rooms is unblocked
  link(room1, door, room2)  - a door connects two rooms
Containment & placement:
  in(object, container)     - object is inside a container (use "I" for player inventory)
  on(object, supporter)     - object is on a supporter (table, counter, etc.)
Container state:
  open(container)           - container is open
  closed(container)         - container is closed
Object properties:
  sharp(object)             - object is sharp (e.g. knife)
  cookable(food)            - food can be cooked
  cuttable(food)            - food can be cut
  drinkable(food)           - food can be drunk
  edible(food)              - food is safe to eat
  inedible(food)            - food is not safe to eat (needs cooking)
Food state:
  raw(food)                 - food has not been cooked
  cooked(food)              - food has been cooked
  burned(food)              - food has been burned (overcooked)
  needs_cooking(food)       - food must be cooked before eating
  uncut(food)               - food has not been cut
  fried(food)               - food was cooked on a stove
  roasted(food)             - food was cooked in an oven
  grilled(food)             - food was cooked on a BBQ/toaster
  sliced(food)              - food was sliced
  diced(food)               - food was diced
  chopped(food)             - food was chopped
Recipe & meal:
  ingredient_1(food) .. ingredient_5(food) - food is the Nth recipe ingredient
  cooking_location(room, recipe)           - room where the meal must be prepared
  out(meal, recipe)                        - the meal produced by the recipe
  base(food, ingredient)                   - the base food of a recipe ingredient
  used(food)               - food has been used in a meal
  consumed(food)           - food has been eaten/drunk
=== INSTRUCTIONS ===
- Apply the action to update the state
- Use ONLY predicates from the vocabulary above
- Update only the predicates affected by the action
- Keep all other predicates unchanged
- Output valid JSON only, nothing else
=== CURRENT STATE ===
{{ state }}
=== ACTION ===
{{ action }}
\end{lstlisting}
\end{tcolorbox}
\begin{tcolorbox}[promptbox,title={StrictState, Decoder prompt}]
\begin{lstlisting}[style=promptstyle]
Now it's the DECODE step. Generate the observation text that the player would see given the state and action provided at the end of this prompt.
=== EXAMPLE OBSERVATIONS ===
For navigation (e.g. "go east"):
```
-= Kitchen =-
You arrive in a kitchen. An usual kind of place. Okay, just remember what you're here to do, and everything will go great.
As if things weren't amazing enough already, you can even see a fridge. You idly wonder how they came up with the name TextWorld for this place. It's pretty fitting.
There is an exit to the west.
>                                                                                                                                -= Kitchen =-0/6
```
For taking items (e.g. "take block of cheese from workbench"):
```
You take the block of cheese from the workbench.
>                                                                                                                                -= Shed =-0/23
```
For cooking (e.g. "cook red onion with stove"):
```
You fried the red onion.
Your score has just gone up by one point.
>                                                                                                                                -= Kitchen =-6/35
```
For cutting (e.g. "slice red apple with knife"):
```
You slice the red apple.
Your score has just gone up by one point.
>                                                                                                                                -= Backyard =-5/31
```
For inventory (e.g. "inventory"):
```
You are carrying: a raw yellow potato.
>                                                                                                                                -= Bedroom =-0/16
```
For dropping items (e.g. "drop purple potato"):
```
You drop the purple potato on the ground.
>                                                                                                                                -= Garden =-1/10
```
For preparing meal (e.g. "prepare meal"):
```
Adding the meal to your inventory.
Your score has just gone up by one point.
>                                                                                                                                -= Kitchen =-8/41
```
=== INSTRUCTIONS ===
- Generate observation text from the state and action taken
- Learn the output format from the observation examples
- Output ONLY the observation text, nothing else
=== STATE ===
{{ state }}
=== ACTION TAKEN ===
{{ action }}
\end{lstlisting}
\end{tcolorbox}
\captionof{table}{Prompts used for the StrictState world model. The system prompt is shared by the encoder, transition, and decoder prompts.}

\end{document}